\definecolor{violet}{rgb}{0.5,0.0,1.0}
\definecolor{grey}{rgb}{0.7,0.7,0.7}
\newcommand*\samethanks[1][\value{footnote}]{\footnotemark[#1]}
\title{A Distributional Framework for Data Valuation}
\author{Amirata Ghorbani\thanks{Equal Contribution. Authors listed alphabetically.}\\\texttt{amiratag@stanford.edu} \and Michael P. Kim\samethanks[1] \thanks{Supported in part by CISPA Center for Information Security and NSF Award IIS-1908774}\\\texttt{mpk@cs.stanford.edu} \and James Zou\\\texttt{jamesz@stanford.edu}}
\date{}
\begin{document}
\maketitle

\begin{abstract}
Shapley value is a classic notion from game theory,
historically used to quantify the contributions of individuals within groups, and more recently applied to assign values to data points when training machine learning models.
Despite its foundational role, a key limitation of the data Shapley framework is that it only provides valuations for points within a \emph{fixed data set}.
It does not account for
statistical aspects of the
data and does not give a way to reason about points outside the data set.

To address these limitations,
we propose a novel framework -- \emph{distributional Shapley} -- where the value of a point is defined in the context of an underlying data distribution.
We prove that distributional Shapley has several desirable statistical properties; for example, the values are stable under perturbations to the data points themselves and to the underlying data distribution.
We leverage these properties to develop a new algorithm for estimating values from data, which comes with formal guarantees and runs two orders of magnitude faster than state-of-the-art algorithms for computing the (non-distributional) data Shapley values.
We apply distributional Shapley to diverse data sets and demonstrate its utility in a data market setting.
\end{abstract}

\section{Introduction}
\label{sec:intro}

As data becomes an essential driver of innovation and service, how to quantify the value of data is an increasingly important topic of inquiry with policy, economic, and machine learning (ML) implications.
In the policy arena, recent proposals, such as the Dashboard Act in the U.S. Senate, stipulate that large companies quantify the value of data they collect.
In the global economy, the business model of many companies involves buying and selling data.
For ML engineering, it is often beneficial to know which type of training data is most valuable and, hence, most deserving of resources towards collection and annotation.
As such, a principled framework for data valuation would be tremendously useful in all of these domains. 

Recent works initiated a formal study of data valuation in ML \cite{datashapley,jia2019towards}.
In a typical setting, a data set $B = \{z_i\}$ is used to train a ML model,
which achieves certain performance, say classification accuracy $0.9$.
The data valuation problem is to
assign credit amongst the training set, so that each point gets an ``equitable'' share for its contribution towards achieving the $0.9$ accuracy.
Most works have focused on leveraging \emph{Shapley value} as the metric to quantify the contribution of individual $z_i$.
The focus on Shapley value
is in large part
due to the fact that
Shapley uniquely satisfies basic properties for equitable credit allocation \cite{shapley1953value}.
Empirical experiments also show that data Shapley is very effective -- more so than leave-one-out scores -- at identifying points whose addition or removal substantially impacts learning \cite{ghorbani2017interpretation,datashapley}.

At a high-level, prior works on data Shapley require three ingredients: (1) a fixed training data set of $m$ points; (2) a learning algorithm; and (3) a performance metric that measures the overall value of a trained model.
The goal of this work is to significantly reduce the dependency on the first ingredient.
While convenient, formulating the value based on a \emph{fixed data set} disregards crucial statistical considerations and, thus, poses significant practical limitations.

In standard settings, we imagine that data is sampled from a distribution $\mathcal{D}$; measuring the Shapley value with respect to a fixed data set
ignores this underlying distribution.
It also means that the value of a data point computed within one data set
may not make sense when the point is transferred to a new data set.
If we actually want to buy and sell data, then it is important that the value of a given data point represents some intrinsic quality of the datum within the distribution.
For example, a data seller might determine that $z$ has high value based on their data set
$B_s$
and sell $z$ to a buyer at a high price.
Even if the buyer's data set 
$B_b$
is drawn from a similar distribution as 
$B_s$,
the existing data Shapley framework
provides no guarantee of consistency between the value of $z$ computed within
$B_s$
and within
$B_b$.
This inconsistency may be especially pronounced in the case when
the buyer has significantly less data than the seller.

\subsection*{Our contributions}
\paragraph{Conceptual.} Extending prior works on data Shapley, we formulate and develop a notion of \emph{distributional Shapley value} in Section~\ref{sec:dshapley}.
We define the distributional variant in terms of the original data Shapley:
the distributional Shapley value is taken to be the expected data Shapley value, where the data set is drawn i.i.d.\ from the underlying data distribution.
Reformulating this notion of value as a statistical quantity allows us to prove that the notion is stable with respect to perturbations to the inputs as well as the underlying data distribution.
Further, we show a mathematical identity that gives an equivalent definition of distributional Shapley as an expected marginal performance increase by adding the point,
suggesting an unbiased estimator.

\textbf{Algorithmic.}~ In Section~\ref{sec:alg}, we develop this estimator into a novel sampling-based algorithm, \textsc{$\D$-Shapley}.
In contrast to prior estimation heuristics, \textsc{$\D$-Shapley} comes with strong formal approximation guarantees.
Leveraging the stability properties of distributional Shapley value and the simple nature of our algorithm, we develop theoretically-principled optimizations to \textsc{$\D$-Shapley}.
In our experiments across diverse tasks, the optimizations lead to order-of-magnitude reductions in computational costs while maintaining the quality of estimations.

\textbf{Empirical.}~ Finally, in Section~\ref{sec:case}, we present a data pricing case study that demonstrates the consistency of values produced by \textsc{$\D$-Shapley}.
In particular, we show that a data broker can list distributional Shapley values as ``prices,'' which a collection of buyers all agree are fair (i.e.\ the data gives each buyer as much value as the seller claims).
In all, our results demonstrate that the distributional Shapley framework represents a significant step towards the practical viability of the Shapley-based approaches to data valuation.

\subsection*{Related works}
Shapley value, introduced in \cite{shapley1953value}, has been studied extensively in the literature on cooperative games and economics \cite{shapley1988shapley}, and has traditionally been used in the valuation of private information and data markets~\cite{kleinberg2001value,agarwal2019marketplace}.

Our work is most directly related to recent works that apply Shapley value to the data valuation problem.
\cite{datashapley} developed the notion of ``Data Shapley'' and provided algorithms to efficiently estimate values.
Specifically, leveraging the permutation-based characterization of Shapley value, they developed a ``truncated Monte Carlo'' sampling scheme (referred to as \textsc{TMC-Shapley}), demonstrating empirical effectiveness across various ML tasks.
\cite{jia2019towards} introduce several additional approximation methods for efficient computation of Shapley values for training data; subsequently, \cite{jia2019efficient} provided an algorithm for exact computation of Shapley values for the specific case of nearest neighbor classifiers.

Beyond data valuation, the Shapley framework has been used in a variety of ML applications, e.g.\ as a measure of feature importance \cite{cohen2007shap4,kononenko2010shap1,datta2016shap2,lundberg2017shap3,chen2018shapley}.
The idea of a distributional Shapley value bears resemblance to the Aumann-Shapley value \cite{aumann}, a measure-theoretic variant of the Shapley that quantifies the value of individuals within a continuous ``infinite game.''
Our distributional Shapley value focuses on the tangible setting of finite data sets drawn from a (possibly continuous) distribution.

\section{Distributional Data Valuation}
\label{sec:dshapley}
\subsection*{Preliminaries}

Let $\D$ denote a data distribution supported on a universe $\Z$.
For supervised learning problems, we often think of $\Z = \X \times \Y$ where $\X \subseteq \R^d$ and $\Y$ is the output, which can be discrete or continuous.
For $m \in \N$, let $S \sim \D^m$ a collection of $k$ data points sampled i.i.d.\ from $\D$.
Throughout, we use the shorthand $[m] = \set{1,\hdots,m}$ and let $k \sim [m]$ denote a uniform random sample from $[m]$.

We denote by $\pot:\Z^* \to [0,1]$ a potential function\footnote{We use $\Z^* = \bigcup_{n\in \N} \Z^n$ to indicates any finite Cartesian product of $\Z$ with itself; thus, $\pot$ is well-defined on the any natural number of inputs from $\Z$.} or performance metric, where for any $S \subseteq \Z$, $\pot(S)$ represents abstractly the value of the subset. While our analysis applies broadly, in our context, we think of $\pot$ as capturing both the \emph{learning algorithm} and the \emph{evaluation metric}.
For instance, in the context of training a logistic regression model, we might think of $\pot(S)$ as returning the population accuracy of the empirical risk minimizer when $S$ is the training set.

\subsection{Distributional Shapley Value}
Our starting point is the data Shapley value, proposed in \cite{datashapley,jia2019towards} as a way to valuate training data equitably.
\begin{definition}[Data Shapley Value]
Given a potential function $\pot$
and data set $B \subseteq \Z$ where $\card{B} = m$, the data Shapley value of a point $z \in B$ is defined as
\begin{gather*}
\sh(z;\pot,B) \triangleq \frac{1}{m} \sum_{k=1}^m \frac{1}{\binom{m-1}{k-1}} \sum_{\substack{S \subseteq B\setminus\set{z}:\\\card{S}=k-1}} \left(\pot(S \cup \set{z}) - \pot(S)\right).
\end{gather*}
\end{definition}
In words, the data Shapley value of a point $z \in B$ is a weighted empirical average over subsets $S \subseteq B$ of the marginal potential contribution of $z$ to each $S$;
the weighting is such that each possible cardinality $\card{S} = k \in \set{0,\hdots,m-1}$ is weighted equally.
The data Shapley value satisfies a number of desirable properties; indeed, it is the unique valuation function that satisfies the Shapley axioms\footnote{For completeness, the axioms -- symmetry, null player, additivity, and efficiency -- are reviewed in Appendix~\ref{app:axioms}.}. Note that
as the data set size grows, the absolute magnitude of individual data points' values typically scales inversely.

While data Shapley value is a natural solution concept for
data valuation,
its formulation leads to several limitations.
In particular, the values may be very sensitive to the exact choice of $B$; given another $B' \neq B$ where $z \in B \cap B'$, the value $\sh(z;\pot,B)$ might be quite different from $\sh(z;\pot,B')$.
At the extreme, if a new point $z' \not \in B$ is added to $B$, then in principle, we would have to rerun the procedure to compute the data Shapley values for all points in $B \cup \set{z'}$.

In settings where our data are drawn from an underlying distribution $\D$,
a natural extension to the data Shapley approach would parameterize the valuation function by $\D$,
rather than the specific draw of the data set.
Such a distributional Shapley value should be more stable, by removing the explicit dependence on the draw of the training data set.

\begin{definition}[Distributional Shapley Value]
Given a potential function $\pot:\Z^* \to [0,1]$, a distribution $\D$ supported on $\Z$, and some $m \in \N$, the distributional Shapley value of a point $z \in \Z$ is the expected data Shapley value over data sets of size $m$ containing $x$.
\begin{equation*}
\val(z;\pot,\D,m) \triangleq \E_{B \sim \D^{m-1}}\left[\sh\left(z;\pot,B \cup \set{z}\right)\right]
\end{equation*}
\end{definition}
In other words, we can think of the data Shapley value as a random variable that depends on the specific draw of data from $\D$.
Taking the distributional Shapley value $\val(z;\pot,\D,m)$ to be the expectation of this random variable eliminates instability caused by the variance of $\sh(z;\pot,B)$.
While distributional Shapley is simple to state based on the original Shapley value, to the best of our knowledge, the concept is novel to this work.

We note that, while more stable, the distributional Shapley value inherits many of the desirable properties of Shapley, including the Shapley axioms and an expected efficiency property; we cover these in Appendix~B.
Importantly, distributional Shapley also has a clean characterization as the expected gain in potential by adding $z \in \Z$ to a random data set (of random size). 
\begin{theorem}
\label{thm:Dshapley}
Fixing $\pot$ and $\D$, for all $z \in \Z$ and $m \in \N$,
\begin{equation*}
\val(z;\pot,\D,m) = \E_{\substack{k \sim [m]\\S \sim \D^{k-1}}}\left[\pot(S \cup \set{z}) - \pot(S)\right]
\end{equation*}
That is, the distributional Shapley value of a point is its expected marginal contribution in $\pot$ to a set of i.i.d.\ samples from $\D$ of uniform random cardinality.
\end{theorem}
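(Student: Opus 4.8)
The plan is to prove the identity by directly expanding the definition of $\val$, pushing the expectation over $B \sim \D^{m-1}$ inside the finite sums that define the data Shapley value, and then exploiting the fact that the points of $B$ are i.i.d.

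First I would substitute the definition of $\sh(z;\pot,B\cup\set{z})$ into the definition of $\val(z;\pot,\D,m)$. Since $\card{B \cup \set{z}} = m$ and $(B\cup\set{z})\setminus\set{z} = B$, this gives
\begin{equation*}
\val(z;\pot,\D,m) = \E_{B\sim\D^{m-1}}\left[\frac{1}{m}\sum_{k=1}^m \frac{1}{\binom{m-1}{k-1}} \sum_{\substack{S\subseteq B:\\ \card{S}=k-1}} \bigl(\pot(S\cup\set{z}) - \pot(S)\bigr)\right].
\end{equation*}
By linearity of expectation (the sums are finite), I can move $\E_B$ inside both sums, so it suffices to compute, for each fixed $k \in [m]$, the quantity $\E_{B\sim\D^{m-1}}\bigl[\sum_{S\subseteq B,\,\card{S}=k-1} (\pot(S\cup\set{z}) - \pot(S))\bigr]$.

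The key step is a symmetry argument. Writing $B = (z_1,\dots,z_{m-1})$ with each $z_i$ drawn i.i.d.\ from $\D$, every size-$(k-1)$ subset of $B$ is indexed by some $I \subseteq [m-1]$ with $\card{I} = k-1$, and the corresponding tuple $(z_i)_{i\in I}$ is itself distributed as $\D^{k-1}$. Hence $\E_{B}[\pot(\set{z_i:i\in I}\cup\set{z}) - \pot(\set{z_i:i\in I})] = \E_{S\sim\D^{k-1}}[\pot(S\cup\set{z}) - \pot(S)]$, which does not depend on $I$. Summing over the $\binom{m-1}{k-1}$ choices of $I$ yields $\binom{m-1}{k-1}\cdot \E_{S\sim\D^{k-1}}[\pot(S\cup\set{z}) - \pot(S)]$. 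Substituting this back, the binomial coefficients cancel against the Shapley weights and I am left with $\frac{1}{m}\sum_{k=1}^m \E_{S\sim\D^{k-1}}[\pot(S\cup\set{z}) - \pot(S)] = \E_{k\sim[m],\,S\sim\D^{k-1}}[\pot(S\cup\set{z}) - \pot(S)]$, which is exactly the claimed expression.

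I do not expect a serious obstacle here — the argument is essentially bookkeeping together with the cancellation of the Shapley weights against the subset counts. The only point requiring mild care is the treatment of the data ``set'' $B\cup\set{z}$ when $\D$ has atoms (so that $z$ could coincide with a point of $B$); I would either restrict attention to the generic case, or, more cleanly, interpret all of these objects as ordered tuples / multisets so that $\card{B\cup\set{z}} = m$ always and the i.i.d.\ symmetry above is literally valid. Either convention leaves the identity unchanged.
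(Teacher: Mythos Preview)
Your proposal is correct and follows essentially the same approach as the paper's own proof: expand the definition, apply linearity of expectation, and use the i.i.d.\ structure of $B$ to see that each size-$(k-1)$ subset is distributed as $\D^{k-1}$, so the binomial weights cancel. Your treatment is in fact slightly more careful, since you make the symmetry argument explicit via indexing sets $I\subseteq[m-1]$ and flag the multiset/atom issue that the paper glosses over.
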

\begin{proof}
The identity holds as a consequence of the definition of data Shapley value and linearity of expectation. 
\begin{align*}
\val(z;\pot,\D,m) &= \E_{D \sim \D^{m-1}}\left[\sh(z;\pot,D \cup \set{z})\right]\\
&=\E_{D \sim \D^{m-1}}\left[\frac{1}{m} \sum_{k=1}^m \frac{1}{\binom{m-1}{k-1}} \sum_{\substack{S \subseteq D:\\\card{S}=k-1}} \left(\pot(S \cup \set{z}) - \pot(S)\right)\right]\\
&= \frac{1}{m} \sum_{k=1}^m \frac{1}{\binom{m-1}{k-1}} \E_{D \sim \D^{m-1}}\left[\sum_{\substack{S \subseteq D:\\\card{S}=k-1}} \left(\pot(S \cup \set{z}) - \pot(S)\right)\right]\\
&=\frac{1}{m} \sum_{k=1}^m \E_{S \sim \D^{k-1}} \left[\pot(S \cup \set{z}) - \pot(S)\right]\addtag\label{thm:Dshapley:pf}\\
&=\E_{\substack{k \sim [m]\\S \sim \D^{k-1}}} \left[\pot(S \cup \set{z}) - \pot(S)\right]
\end{align*}
where (\ref{thm:Dshapley:pf}) follows by the fact that $D \sim \D^{m-1}$ consists of i.i.d.\ samples, so each $S \subseteq D$ with $\card{S} = k-1$ is identically distributed according to $\D^{k-1}$.
\end{proof}

\paragraph{Example: mean estimation.} Leveraging this characterization, for well-structured problems, it is possible to give analytic expressions for the distributional Shapley values.
For instance, consider estimating the mean $\mu$ of a distribution $\D$ supported on $\R^d$.
For a finite subset $S \subseteq \R^d$, we take a potential $\pot(S)$ based on the empirical estimator $\hat{\mu}_S$.
\begin{align*}
\pot_\mu(S) &= \E_{s \sim \D}\left[\norm{s - \mu}^2\right] - \norm{\hat{\mu}_S - \mu}^2
\end{align*}
\begin{proposition}
Suppose $\D$ has bounded second moments.
Then for $z \in \Z$ and $m \in \N$, $\val(z;\pot_\mu,\D,m)$ for mean estimation over $\D$ is given by
\begin{gather*}
\frac{\E_{S \sim \D^m}\left[\pot(S)\right]}{m} + 
\frac{C_m}{m}\cdot \left(\E_{s \sim \D}\left[\norm{s-\mu}^2\right] - \norm{z - \mu}^2\right)
\end{gather*}
for an explicit constant $C_m = \Theta(1)$ determined by $m$.
\end{proposition}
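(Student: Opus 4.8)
The plan is to invoke Theorem~\ref{thm:Dshapley} to turn the claim into a short computation about the squared error of the empirical mean. By that theorem, $\val(z;\pot_\mu,\D,m) = \frac{1}{m}\sum_{k=1}^m \E_{S\sim\D^{k-1}}\left[\pot_\mu(S\cup\set{z}) - \pot_\mu(S)\right]$. Since the first term of $\pot_\mu$ does not depend on the data set, the marginal collapses to $\pot_\mu(S\cup\set{z}) - \pot_\mu(S) = \norm{\hat{\mu}_S - \mu}^2 - \norm{\hat{\mu}_{S\cup\set{z}} - \mu}^2$, so the entire problem is to understand, in expectation over $S\sim\D^{k-1}$, how inserting $z$ changes the error of the empirical mean, and then to average the result over $k \sim [m]$. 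Throughout I would write $\sigma^2 \triangleq \E_{s\sim\D}[\norm{s-\mu}^2]$, which is finite by the bounded-second-moments hypothesis.

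For a generic term with $k \ge 2$, I would use the incremental update $\hat{\mu}_{S\cup\set{z}} = \frac{(k-1)\hat{\mu}_S + z}{k}$ and set $u = \hat{\mu}_S - \mu$, $v = z - \mu$. Expanding, $\norm{\hat{\mu}_{S\cup\set{z}} - \mu}^2 = k^{-2}\big((k-1)^2\norm{u}^2 + 2(k-1)\langle u, v\rangle + \norm{v}^2\big)$, so the marginal equals $k^{-2}\big((2k-1)\norm{u}^2 - 2(k-1)\langle u, v\rangle - \norm{v}^2\big)$. Taking $\E_{S\sim\D^{k-1}}$, the cross term vanishes because $\hat{\mu}_S$ is an unbiased estimator of $\mu$ (so $\E[u] = 0$), while $\E[\norm{u}^2] = \frac{\sigma^2}{k-1}$ is the standard variance-of-the-mean identity; hence this term becomes $\frac{2k-1}{k^2(k-1)}\sigma^2 - \frac{1}{k^2}\norm{z-\mu}^2$. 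The $k=1$ term is handled on its own using the convention $\pot_\mu(\emptyset) = 0$, which gives simply $\sigma^2 - \norm{z-\mu}^2$.

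It then remains to sum over $k$ and regroup. The coefficient of $\norm{z-\mu}^2$ is immediately $-\frac{1}{m}\sum_{k=1}^m k^{-2}$, so I would \emph{define} $C_m \triangleq \sum_{k=1}^m k^{-2}$; this is $\Theta(1)$ since it increases to $\pi^2/6$. For the $\sigma^2$ coefficient I would use the partial-fraction identity $\frac{2k-1}{k^2(k-1)} = \frac{1}{k^2} + \frac{1}{k-1} - \frac{1}{k}$, so that $\sum_{k=2}^m \frac{2k-1}{k^2(k-1)} = (C_m - 1) + (1 - \tfrac{1}{m}) = C_m - \tfrac{1}{m}$ after the last two pieces telescope; adding the $k=1$ contribution gives total $\sigma^2$-coefficient $\frac{1}{m}\big(C_m + 1 - \tfrac1m\big) = \frac{C_m}{m} + \frac{1}{m}\big(1 - \tfrac1m\big)$. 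Finally I would note $\E_{S\sim\D^m}[\pot_\mu(S)] = \sigma^2 - \E[\norm{\hat{\mu}_S-\mu}^2] = \sigma^2\big(1-\tfrac1m\big)$, so that $\frac1m\big(1-\tfrac1m\big)\sigma^2 = \frac{\E_{S\sim\D^m}[\pot_\mu(S)]}{m}$, and assembling the two kinds of terms reproduces exactly the claimed formula.

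I expect the only real obstacle to be bookkeeping rather than mathematical depth: pinning down the $k=1$ base case and the empty-set convention, and carrying out the partial-fraction-and-telescoping simplification of $\sum_k \frac{2k-1}{k^2(k-1)}$ carefully enough that the leftover constant lines up precisely with $\E_{S\sim\D^m}[\pot_\mu(S)]/m$. Once the unbiasedness of $\hat{\mu}_S$ and the $\sigma^2/(k-1)$ variance identity are in hand, every other step is routine algebra.
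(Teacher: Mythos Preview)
Your proposal is correct and follows essentially the same route as the paper: invoke Theorem~\ref{thm:Dshapley}, handle $k=1$ with the convention $\pot_\mu(\emptyset)=0$, compute the expected marginal for $k\ge 2$ via the incremental-mean update together with $\E[\hat\mu_S]=\mu$ and $\E\norm{\hat\mu_S-\mu}^2=\sigma^2/(k-1)$, and then sum over $k$. The only difference is cosmetic bookkeeping---your partial-fraction step yields the constant directly as $C_m=\sum_{k=1}^m k^{-2}$, whereas the paper regroups into $\frac{1}{k(k-1)}$ and $\frac{1}{k^2(k-1)}$ pieces and writes $C(m)=2-\tfrac1m\pm c(m)$; these are the same number, and your version is arguably tidier.
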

Intuitively, this proposition (proved in Appendix~\ref{app:meanest}) highlights some desirable properties of distributional Shapley: the expected value for a random $z \sim \D$ is an uniform share of the potential for a randomly drawn data set $S \sim \D^m$; further, a point has above-average value when it is closer to $\mu$ than expected.
In general, analytically deriving the distributional Shapley value may not be possible. 
In Section~\ref{sec:alg}, we show how the characterization of Theorem~\ref{thm:Dshapley} leads to an efficient algorithm for estimating values.

\subsection{Stability of distributional Shapley values}

Before presenting our algorithm, we discuss stability properties of distributional Shapley, which are interesting in their own right, but also have algorithmic implications.
We show that when the potential function $\pot$ satisfies a natural stability property, the corresponding distributional Shapley value inherits stability under perturbations to the data points and the underlying data distribution.
First, we recall a standard notion of deletion stability, often studied in the context of generalization of learning algorithms \cite{bousquet2002stability}.
\begin{definition}[Deletion Stability]
For potential $\pot:\Z^* \to [0,1]$ and non-increasing $\beta:\N \to [0,1]$, $\pot$ is $\beta(k)$-deletion stable if for all $k \in \N$ and $S \in \Z^{k-1}$, for all $z \in \Z$
\begin{equation*}
\card{\pot(S \cup \set{z}) - \pot(S)} \le \beta(k).
\end{equation*}
\end{definition}
We can similarly discuss the idea of replacement stability, where we bound $\card{\pot(S \cup \set{z}) - \pot(S \cup \set{z'})}$;
note that by the triangle inequality, $\beta(k)$-deletion stability of $\pot$ implies $2\beta(k)$-replacement stability.
To analyze the properties of distributional Shapley, a natural strengthening of replacement stability will be useful, which
we call \emph{Lipschitz stability}.
Lipschitz stability is parameterized by a metric $d$,
requires the degree of robustness under replacement of $z$ with $z'$ to scale according to the distance $d(z,z')$.
\begin{definition}[Lipschitz Stability]
Let $(\Z,d)$ be a metric space.
For potential $\pot:\Z^* \to [0,1]$ and non-increasing $\beta:\N \to [0,1]$, $\pot$ is $\beta(k)$-Lipschitz stable with respect to $d$ if for all $k \in \N$, $S \in \Z^{k-1}$, and all $z,z' \in \Z$,
\begin{equation*}
\card{\pot(S \cup \set{z}) - \pot(S \cup \set{z'})} \le \beta(k) \cdot d(z,z').
\end{equation*}
\end{definition}
By taking $d$ to be the trivial metric, where $d(z,z') = 1$ if $z \neq z'$, we see that Lipschitz-stability generalizes the idea of replacement stability; still, 
there are natural learning algorithms that satisfy Lipschitz stability for nontrivial metrics.
As one example, we show that Regularized empirical risk minimization over a Reproducing Kernel Hilbert Space (RKHS) -- a prototypical example of a replacement stable learning algorithm -- also satisfies this stronger notion of Lipschitz stability.
We include a formal statement and proof in Appendix~\ref{app:rkhs}.

\paragraph{Similar points receive similar values.}
As discussed, a key limitation with the data Shapley approach for fixed data set $B$ is that we can only ascribe values to $z \in B$.
Intuitively, however, we would hope that if two points $z$ and $z'$ are similar according to some appropriate metric, then they would receive similar Shapley values.
We confirm this intuition for distributional Shapley values when the potential function $\pot$ satisfies Lipschitz stability.
\begin{theorem}
\label{thm:stable:points}
Fix a metric space $(\Z,d)$ and a distribution $\D$ over $\Z$; let $\pot:\Z^* \to [0,1]$ be $\beta(k)$-Lipschitz stable with respect to $d$.
Then for all $m \in \N$, for all $z,z' \in \Z$,
\begin{equation*}
\card{\val(z;\pot,\D,m) - \val(z';\pot,\D,m)} \le \E_{k \sim [m]}\left[\beta(k)\right]\cdot d(z,z').
\end{equation*}
\end{theorem}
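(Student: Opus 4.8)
\textbf{Proof proposal for Theorem~\ref{thm:stable:points}.}
The plan is to work directly from the alternative characterization of distributional Shapley given in Theorem~\ref{thm:Dshapley}, which rewrites $\val(z;\pot,\D,m)$ as the expected marginal contribution $\E_{k\sim[m],\,S\sim\D^{k-1}}[\pot(S\cup\set{z}) - \pot(S)]$. The key observation is that this expression is taken over a distribution of pairs $(k,S)$ that does not depend on $z$: the cardinality $k$ is drawn uniformly from $[m]$ and then $S\sim\D^{k-1}$, entirely independently of the point being valued. So when we subtract $\val(z';\pot,\D,m)$ from $\val(z;\pot,\D,m)$, linearity of expectation lets us combine the two expectations over the common distribution of $(k,S)$, and the $\pot(S)$ terms cancel, leaving
\begin{equation*}
\val(z;\pot,\D,m) - \val(z';\pot,\D,m) = \E_{\substack{k\sim[m]\\S\sim\D^{k-1}}}\left[\pot(S\cup\set{z}) - \pot(S\cup\set{z'})\right].
\end{equation*}

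Next I would take absolute values of both sides and push the absolute value inside the expectation using $\card{\E[X]} \le \E[\card{X}]$ (Jensen / triangle inequality for expectations). This reduces the bound to controlling $\card{\pot(S\cup\set{z}) - \pot(S\cup\set{z'})}$ pointwise over the support of $(k,S)$. But that is exactly what $\beta(k)$-Lipschitz stability with respect to $d$ provides: for every $k\in\N$, every $S\in\Z^{k-1}$, and all $z,z'$, this quantity is at most $\beta(k)\cdot d(z,z')$. Substituting this in yields $\card{\val(z;\pot,\D,m) - \val(z';\pot,\D,m)} \le \E_{k\sim[m],\,S\sim\D^{k-1}}[\beta(k)\cdot d(z,z')]$.

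Finally, since $d(z,z')$ is a fixed constant (independent of $k$ and $S$) and the bound $\beta(k)$ depends only on $k$ (not on $S$), the expectation over $S$ is vacuous and $d(z,z')$ pulls out, giving $\E_{k\sim[m]}[\beta(k)]\cdot d(z,z')$, as claimed. I do not anticipate a genuine obstacle here; the only point requiring a moment's care is justifying the cancellation of the $\pot(S)$ terms, i.e.\ that both distributional Shapley values are integrals against the \emph{same} law on $(k,S)$ so that linearity of expectation applies — this is precisely what Theorem~\ref{thm:Dshapley} buys us, and without that reformulation (working instead from the raw definition as an expectation of the combinatorial Shapley sum) the argument would be more cumbersome.
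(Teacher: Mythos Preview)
Your proposal is correct and follows essentially the same route as the paper: both invoke the characterization from Theorem~\ref{thm:Dshapley}, cancel the $\pot(S)$ terms so that only $\pot(S\cup\set{z})-\pot(S\cup\set{z'})$ remains, and then apply $\beta(k)$-Lipschitz stability under the expectation over $k$. The only cosmetic difference is that the paper writes this as an add-and-subtract decomposition yielding a one-sided bound (with the other side by symmetry), whereas you subtract directly and push the absolute value inside the expectation; both are equivalent.
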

\begin{proof}
For any data set size $m \in \N$, we expand $\val(z';\pot,\D,m)$ to express it in terms of $\val(z;\pot,\D,m)$. 
\begin{align*}
\val(z';\pot,\D,m)
&=
\E_{\substack{k \sim [m]\\S \sim \D^{k-1}}}\left[U(S \cup \set{z'}) - U(S)\right]\\
&= \E_{\substack{k \sim [m]\\S \sim \D^{k-1}}}\left[U(S \cup \set{z}) - U(S)\right] + \E_{\substack{k \sim [m]\\S \sim \D^{k-1}}}\left[U(S \cup \set{z'}) - U(S \cup \set{z})\right]\\
&\le \val(z;\pot,\D,m) + \E_{k \sim [m]}\left[\beta(k)\right]\cdot d(z,z')\addtag \label{thm:eqn:lipx}
\end{align*}
where (\ref{thm:eqn:lipx}) follows by the assumption that $\pot$ is $\beta(k)$-Lipschitz stable and linearity of expectation.
\end{proof}
Theorem~\ref{thm:stable:points} suggests that in many settings of interest, the distributional Shapley value will be Lipschitz in $z$.
This Lipschitz property also suggests that, given the values of a (sufficiently-diverse) set of points $Z$, we may be able to infer the values of unseen points $z' \not \in Z$ through interpolation.
Concretely, in Section~\ref{sec:alg:speedup}, we leverage this observation to give an order of magnitude speedup over our baseline estimation algorithm.

\paragraph{Similar distributions yield similar value functions.}
The distributional Shapley value is naturally parameterized by the underlying data distribution $\D$.
For two distributions $\D_s$ and $\D_t$, given the value $\val(z;\pot,\D_s,m)$, what can we say about the value $\val(z;\pot,\D_t,m)$?
Intuitively, if $\D_s$ and $\D_t$ are similar under an appropriate metric, we'd expect that the values should not change too much.
Indeed, we can formally quantify how the distributional Shapley value is stable under distributional shift under the Wasserstein distance.

For two distributions $\D_s,\D_t$ over $\Z$,
let $\Gamma_{st}$ be the collection of joint distributions over $\Z \times \Z$, whose marginals are $\D_s$ and $\D_t$.\footnote{That is, for all $\gamma \in \Gamma_{st}$, if $(s,t) \sim \gamma$, then $s \sim \D_s$ and $t \sim \D_t$.}
Fixing a metric $d$ over $\Z$, the Wasserstein distance is the infimum over all such couplings $\gamma \in \Gamma_{st}$ of the expected distance between $(s,t) \sim \gamma$.
\begin{equation}
\label{eqnerstein}
W_1(\D_s,\D_t) \triangleq \inf_{\gamma \in \Gamma_{st}} ~\E_{(s,t)\sim \gamma} \left[d(s,t)\right]
\end{equation}
We formalize the idea that distributional Shapley values are stable under small perturbations to the underlying data distribution as follows.
\begin{theorem}
\label{thm:stable:dist}
Fix a metric space $(\Z,d)$ and let $\pot:\Z^* \to [0,1]$ be $\beta(k)$-Lipschitz stable with respect to $d$.  Suppose $\D_s$ and $\D_t$ are two distributions over $\Z$.
Then, for all $m \in \N$ and all $z \in \Z$,
\begin{gather*}
\card{\val(z;\pot,\D_s,m) - \val(z;\pot,\D_t,m)} \le \frac{2}{m} \sum_{k =1}^{m-1} k \beta(k) \cdot W_1(\D_s,\D_t).
\end{gather*}
\end{theorem}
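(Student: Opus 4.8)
The plan is to reduce the claim to a per-cardinality estimate using the marginal-contribution characterization of Theorem~\ref{thm:Dshapley}, and then to control the discrepancy between $\D_s$ and $\D_t$ at each cardinality by a coupling-plus-hybrid argument. Writing $g_k(S) \triangleq \pot(S \cup \set{z}) - \pot(S)$, Theorem~\ref{thm:Dshapley} gives
\[
\val(z;\pot,\D_s,m) - \val(z;\pot,\D_t,m) = \frac{1}{m}\sum_{k=1}^m \left( \E_{S \sim \D_s^{k-1}}[g_k(S)] - \E_{T \sim \D_t^{k-1}}[g_k(T)] \right),
\]
so it suffices to bound $\bigl|\E_{S \sim \D_s^{k-1}}[g_k(S)] - \E_{T \sim \D_t^{k-1}}[g_k(T)]\bigr|$ for each $k \in [m]$ and then sum.

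For fixed $k$, I would fix a coupling $\gamma \in \Gamma_{st}$ that (nearly) attains $W_1(\D_s,\D_t)$ and draw $(s_1,t_1),\dots,(s_{k-1},t_{k-1})$ i.i.d.\ from $\gamma$, so that $S = (s_1,\dots,s_{k-1}) \sim \D_s^{k-1}$ and $T = (t_1,\dots,t_{k-1}) \sim \D_t^{k-1}$ simultaneously; the discrepancy above then equals $\E\bigl[g_k(S) - g_k(T)\bigr]$ under this joint law. Next I would introduce the hybrids $H_j = (t_1,\dots,t_j,s_{j+1},\dots,s_{k-1})$ with $H_0 = S$, $H_{k-1} = T$, and telescope $g_k(S) - g_k(T) = \sum_{j=1}^{k-1}\bigl(g_k(H_{j-1}) - g_k(H_j)\bigr)$. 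Since $H_{j-1}$ and $H_j$ differ only in coordinate $j$ ($s_j$ versus $t_j$), expanding $g_k(H_{j-1}) - g_k(H_j) = \bigl[\pot(H_{j-1}\cup\set{z}) - \pot(H_j\cup\set{z})\bigr] - \bigl[\pot(H_{j-1}) - \pot(H_j)\bigr]$ and applying Lipschitz stability to each bracket bounds the first bracket by $\beta(k)\,d(s_j,t_j)$ (two size-$k$ sets differing in one point) and the second by $\beta(k-1)\,d(s_j,t_j)$ (two size-$(k-1)$ sets differing in one point); monotonicity of $\beta$ then gives $\bigl|g_k(H_{j-1}) - g_k(H_j)\bigr| \le 2\beta(k-1)\,d(s_j,t_j)$.

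Taking expectations over the coupling, $\bigl|\E[g_k(S) - g_k(T)]\bigr| \le 2\beta(k-1)\sum_{j=1}^{k-1}\E_{(s_j,t_j)\sim\gamma}[d(s_j,t_j)] = 2(k-1)\beta(k-1)\,W_1(\D_s,\D_t)$ by the choice of $\gamma$. Summing over $k \in [m]$, dividing by $m$, and reindexing $k-1 \mapsto k$ (the $k=1$ term vanishes, since $S$ is then empty and $g_1$ does not depend on the distribution) yields $\bigl|\val(z;\pot,\D_s,m) - \val(z;\pot,\D_t,m)\bigr| \le \tfrac{2}{m}\sum_{k=1}^{m-1} k\beta(k)\cdot W_1(\D_s,\D_t)$.

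I expect the argument to be largely routine once this scaffolding is in place; the two places that need care are (i) the coupling bookkeeping --- checking that the product coupling $\gamma^{k-1}$ has marginals $\D_s^{k-1}$ and $\D_t^{k-1}$, and, to be scrupulous, approaching the infimum in $W_1$ through $\varepsilon$-optimal couplings and letting $\varepsilon \to 0$ --- and (ii) tracking the correct cardinality fed into $\beta$ in the hybrid step (index $k$ for the potentials that contain $z$, index $k-1$ for the others), which is precisely what produces the $k\beta(k)$ weighting rather than a shifted version. A minor measure-theoretic wrinkle is the event that $z$ coincides with one of the sampled points; this has probability zero for continuous $\D$ and can otherwise be folded into the set notation without affecting the bound.
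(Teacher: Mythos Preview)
Your proposal is correct and follows essentially the same route as the paper's proof: both use the marginal-contribution characterization (Theorem~\ref{thm:Dshapley}), couple $\D_s^{k-1}$ and $\D_t^{k-1}$ via an i.i.d.\ product of a (near-)optimal coupling $\gamma$, telescope through one-coordinate hybrids, and bound each hybrid step by $2\beta(k-1)\,d(s_j,t_j)$ before summing and reindexing. The only cosmetic difference is that you spell out the two brackets and the cardinalities $k$ and $k-1$ explicitly, whereas the paper packages the same computation as ``$\Delta_z\pot$ is $2\beta(k)$-Lipschitz stable since $\beta$ is non-increasing''; the resulting bound is identical.
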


\begin{proof}
For notational convenience, for any $z \in \Z$ and subset $S \subseteq \Z$, we denote $\Delta_z \pot(S) = \pot(S \cup \set{z}) - \pot(S)$.
Thus, fixing $z \in \Z$, we can write $\val(z;\pot,\D,m)$ as $\E_{k \sim [m]}\E_{S \sim \D^{k-1}}\left[\Delta_z \pot(S)\right]$.
We analyze $\E_{S \sim \D^{k-1}}\left[\Delta_z \pot(S)\right]$ for each fixed $k \in \set{2,\hdots,m}$ separately.\footnote{Note that for a fixed potential $U$, $m=1$ is uninteresting because both sides of the inequality are $0$; in particular, $\card{S}$ is always $0$, so the LHS is given by the difference $U(z) - U(z)$.}

Let $\gamma \in \Gamma_{st}$ be some coupling of $\D_s$ and $\D_t$.
Then, we can expand the expectation as follows.
\begin{align}
\E_{S \sim \D_s^{k-1}}\left[\Delta_z \pot(S)\right]
&=
\E_{S\times T \sim \gamma^{k-1}}\left[\Delta_z \pot(S)\right]\label{thm:eqn:wass:coup1}\\
&= \E_{S\times T
}\left[\Delta_z \pot(S) - \Delta_z \pot(T)\right] + \E_{S\times T
}\left[\Delta_z \pot(T) \right]\label{thm:eqn:wass:linearity}\\
&= \E_{S\times T
}\left[\Delta_z \pot(S) - \Delta_z \pot(T)\right] + \E_{T \sim \D_t^{k-1}}\left[\Delta_z \pot(T)\right]\label{thm:eqn:wass:coup2}
\end{align}
where (\ref{thm:eqn:wass:coup1}) and (\ref{thm:eqn:wass:coup2}) follow by the assumption that the marginals of $\gamma$ are $\D_s$ and $\D_t$; and (\ref{thm:eqn:wass:linearity}) follows by linearity of expectation.

To bound the first term of (\ref{thm:eqn:wass:coup2}), we expand the difference between $\Delta_z \pot(S)$ and $\Delta_z \pot(T)$ into a telescoping sum of $k$ pairs of terms, where we bound each pair to depend on a single draw $(s_i,t_i) \sim \gamma$.
For $S,T \in \Z^k$ and $i \in \set{0,\hdots,k}$, denote by $Z_i = \left(\bigcup_{j=i+1}^k s_j\right) \cup \left(\bigcup_{j=1}^i t_j\right)$; note that $Z_0 = S$ and $Z_k = T$.
Then, we can rewrite $\Delta_z \pot(S) - \Delta_z \pot(T)$ as follows.
\begin{align*}
\Delta_z \pot(S) - \Delta_z \pot(T)
&= \sum_{i=1}^k \Delta_z \pot(Z_{i-1}) - \Delta_z \pot(Z_i)
\end{align*}
Now suppose $\pot$ is $\beta(k)$-Lipschitz stable with respect to $d$; note that this implies $\Delta_z \pot$ is $2\beta(k)$-Lipschitz stable (because $\beta$ is non-increasing).
Then, we obtain the following bound.
\begin{align}
\E_{S\times T \sim \gamma^{k-1}}\left[\Delta_z \pot(S) - \Delta_z \pot(T)\right]
&=
\E_{S\times T \sim \gamma^{k-1}}\left[\sum_{i=1}^{k-1} \Delta_z \pot(Z_{i-1}) - \Delta_z \pot(Z_i)\right] \notag\\
&= \sum_{i=1}^{k-1} \E_{S,T \sim \gamma^{k-1}}\left[\Delta_z \pot(Z_{i-1}) - \Delta_z \pot(Z_i)\right]\notag \\
&=\sum_{i=1}^{k-1}
\E_{\substack{s_i,t_i \sim \gamma\\R \in \Z^{k-2}}}
\left[\Delta_z \pot(R \cup \set{s_i}) - \Delta_z \pot(R \cup \set{t_i})\right] \label{thm:eqn:wass:neighbor}\\
&\le 2\beta(k-1) \cdot \sum_{i=1}^{k-1}  \E_{(s_i,t_i) \sim \gamma}[d(s_i,t_i)]\label{thm:eqn:wass:stable}\\
&\le 2(k-1)\beta(k-1) \cdot \E_{(s,t) \sim \gamma}[d(s,t)]\label{thm:eqn:wass:iid}
\end{align}
where (\ref{thm:eqn:wass:neighbor}) notes $Z_{i-1}$ and $Z_i$ differ on only the $i$th data point; (\ref{thm:eqn:wass:stable}) follows from the assumption that $\Delta_z \pot$ is $2\beta(k)$-Lischitz stable and linearity of expectation; and finally (\ref{thm:eqn:wass:iid}) follows by the fact that each draw from $\gamma$ is i.i.d.

Finally, we note that the argument above worked for an arbitrary coupling in $\Gamma_{st}$; thus, we can express the difference in values in terms of the infimum over $\Gamma_{st}$.
\begin{align*}
&\val(z; \pot,\D_s,m) - \val(z; \pot,\D_t,m)\\
&\le \inf_{\gamma \in \Gamma_{st}}~\E_{k \sim [m]}\left[\E_{S\times T \sim \gamma^{k-1}}\left[\Delta_ \pot(S) - \Delta_z \pot(T)\right]\right]\\
&\le \frac{2}{m} \sum_{k = 2}^m (k-1)\beta(k-1) \inf_{\gamma \in \Gamma_{st}} \E_{(s,t) \sim \gamma}[d(s,t)]\\
&= \frac{2}{m} \sum_{k=1}^{m-1} k\beta(k) \cdot W_1(\D_s,\D_t)
\end{align*}
where the first summation is taken over $k \in \set{2,\hdots,m}$ as the term associated with $k=1$ is $0$.
\end{proof}

Note that the theorem bounds the difference in values under shifts in distribution holding the potential $\pot$ fixed.
Often in applications, we will take the potential function to depend on the underlying data distribution.
For instance, we may take to be a measure of population accuracy, e.g.\ $\pot_{\D_s} = 1-\E_{z \sim \D}\left[\ell_S(z)\right]$,
where $\ell_S(z)$ is the loss on a point $z \in \Z$ achieved by a model trained on the data set $S \subseteq \Z$.
In the case where we only have access to samples from $\D_s$, we still may want to guarantee that $\val(z;\pot_{\D_s},\D_s,m)$ and $\val(z;\pot_{\D_t},\D_t,m)$ are close.
Thankfully, such a result follows by showing that $\pot_{\D_s}$ is close to $\pot_{\D_t}$, and another application of the triangle inequality.
For instance, when the potential is based on the population loss for a Lipschitz loss function, we can bound the difference in the potentials, again, in terms of the Wasserstein distance.
\begin{align*}
\pot_{\D_t}(Z) - \pot_{\D_s}(Z) &=
\E_{s \sim \D_s}\left[\ell_Z(s)\right] - \E_{t \sim \D_t}\left[\ell_Z(t)\right]\\
&=\inf_{\gamma \in \Gamma_{st}}\E_{(s,t) \sim \gamma}\left[\ell_Z(s) - \ell_Z(t)\right]\\
&\le \inf_{\gamma \in \Gamma_{st}}\E_{s,t}\left[L \cdot d(s,t)\right]\\
&\le L \cdot W_1(\D_s,\D_t).
\end{align*}

\section{Efficiently Estimating Distributional Shapley Values}
\label{sec:alg}

Here, we describe an estimation procedure, \textsc{$\D$-Shapley}, for computing distributional Shapley values.  To begin, we assume that we can actually sample from the underlying $\D$.
Then, in Section~\ref{sec:alg:speedup}, we propose techniques to speed up the estimation and look into the practical issues of obtaining samples from the distribution.
The result of these considerations is a practically-motivated variant of the estimation procedure, \textsc{Fast-$\D$-Shapley}.
In Section~\ref{sec:alg:empirical}, we investigate how these optimizations perform empirically; we show that the strategies provide a way to smoothly trade-off the precision of the valuation for computational cost.

\subsection{Obtaining unbiased estimates}
The formulation from Theorem~\ref{thm:Dshapley} suggests a natural algorithm for estimating the distributional Shapley values of a set of points.
In particular, the distributional Shapley value $\val(z;\pot,\D,m)$ is the expectation of the marginal contribution of $z$ to $S \subseteq \Z$ on $\pot$, drawn from a specific distribution over data sets.
Thus, the change in performance when we add a point $z$ to a data set $S$ drawn from the correct distribution will be an unbiased estimate of the distributional Shapley value.
Consider the Algorithm~\ref{alg}, \textsc{$\D$-Shapley}, which given a subset $Z_0 \subseteq \Z$ of data, maintains for each $z \in Z_0$ a running average of $\pot(S \cup \set{z}) - \pot(S)$ over randomly drawn $S$.

\begin{algorithm}
\caption{\label{alg}~\textsc{$\D$-Shapley}}

\textbf{Fix:} \emph{potential $\pot:\Z^* \to [0,1]$; distribution $\D$; $m \in \N$}

\textbf{Given:} \emph{data set $Z \subseteq \Z$ to valuate; \# iterations $T \in \N$}

\begin{algorithmic}
\FOR{$z \in Z$}
\STATE $\val_1(z) \gets 0$\hfill\texttt{// initialize estimates}
\ENDFOR
\FOR{$t = 1,\hdots,T$}
\STATE Sample $S_t \sim \D^{k-1}$ for $k \sim [m]$
\FOR{$z \in Z$}
\STATE $\Delta_z\pot(S_t) \gets \pot(S_t \cup \set{z}) - \pot(S_t)$
\STATE $\val_{t+1}(z) \gets \frac{1}{t}\cdot \Delta_z\pot(S_t) + \frac{t-1}{t}\cdot \val_t(z)$
\\\hfill\texttt{// update unbiased estimate}
\ENDFOR
\ENDFOR
\STATE \textbf{return} $\set{(z,\val_T(z)) : z \in Z}$
\end{algorithmic}
\end{algorithm}

In each iteration, Algorithm~\ref{alg} uses a fixed sample $S_t$ to estimate the marginal contribution to $\pot(S_t \cup \set{z}) - \pot(S_t)$ for each $z \in Z$.
This reuse correlates the estimation errors between points in $Z$, but provides computational savings.
Recall that each evaluation of $\pot(S)$ requires training a ML model using the points in $S$; thus,
using the same $S$ for each $z \in Z$ reduces the number of models to be trained by $\card{Z}$ per iteration.
In cases where the $\pot(S \cup \set{z})$ can be derived efficiently from $\pot(S)$, the savings may be even more dramatic; for instance, given a machine-learned model trained on $S$, it may be significantly cheaper to derive a model trained on $S \cup \set{z}$ than retraining from scratch \cite{ginart2019making}.

The running time of Algorithm~\ref{alg} can naively be upper bounded by the product of the number of iterations before termination $T$, the cardinality $\card{Z}$ of the points to valuate, and the expected time to evaluate $\pot$ on data sets of size $k \sim [m]$.
We analyze the iteration complexity necessary to achieve $\eps$-approximations of $\val(z;\pot,\D,m)$ for each $z \in Z$.
\begin{theorem}
\label{thm:iterations:uniform}
Fixing a potential $U$ and distribution $\D$, and $Z \subseteq \Z$, suppose  $T \ge \Omega\left(\frac{\log(\card{Z}/\delta)}{\eps^2}\right)$.
Algorithm~\ref{alg} produces unbiased estimates and with probability at least $1-\delta$,
$\card{\val(z;\pot,\D,m) - \val_T(z)} \le \eps$.
for all $z \in Z$.
\end{theorem}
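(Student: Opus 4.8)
The plan is to recognize \textsc{$\D$-Shapley} as a plain Monte Carlo estimator and conclude via a Chernoff--Hoeffding bound followed by a union bound over $Z$. First I would unwind the recursive update $\val_{t+1}(z) \gets \frac{1}{t}\Delta_z\pot(S_t) + \frac{t-1}{t}\val_t(z)$: a one-line induction on $t$ shows that $\val_T(z) = \frac{1}{T}\sum_{t=1}^T \Delta_z\pot(S_t)$, i.e.\ the algorithm maintains exactly the empirical mean of the per-iteration marginal contributions. Next I would establish unbiasedness: in iteration $t$ the algorithm draws $k \sim [m]$ uniformly and then $S_t \sim \D^{k-1}$, so by Theorem~\ref{thm:Dshapley} we have $\E[\Delta_z\pot(S_t)] = \E_{k\sim[m],\,S\sim\D^{k-1}}[\pot(S\cup\set{z}) - \pot(S)] = \val(z;\pot,\D,m)$ for every $z \in Z$; hence each $\val_T(z)$ is unbiased.

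For the concentration step, the key observation is that since $\pot$ takes values in $[0,1]$, each term $\Delta_z\pot(S_t) = \pot(S_t\cup\set{z}) - \pot(S_t)$ lies in the interval $[-1,1]$, and across the iterations $t = 1,\dots,T$ these $T$ random variables are i.i.d.\ (a fresh $S_t$ is drawn independently each iteration; the reuse of $S_t$ across the points of $Z$ correlates the errors only between \emph{distinct} $z$, which is irrelevant for a union bound). Thus for each fixed $z$, Hoeffding's inequality applied to the average of $T$ i.i.d.\ variables supported on an interval of width $2$ gives $\Pr\left[\card{\val_T(z) - \val(z;\pot,\D,m)} > \eps\right] \le 2\exp(-T\eps^2/2)$. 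Taking a union bound over the $\card{Z}$ points and requiring $2\card{Z}\exp(-T\eps^2/2) \le \delta$ yields the claimed $T = \Omega\!\left(\log(\card{Z}/\delta)/\eps^2\right)$.

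There is no real obstacle here; the proof is essentially bookkeeping. The only points needing a moment's care are (i) verifying that the recursive update computes the empirical mean, (ii) invoking Theorem~\ref{thm:Dshapley} to get unbiasedness cleanly, and (iii) tracking the constant in Hoeffding's bound given that $\Delta_z\pot$ ranges over an interval of width $2$ rather than $1$. One could alternatively phrase the concentration as an Azuma-type martingale argument to accommodate adaptively chosen samples, but since the $S_t$ are i.i.d.\ the plain Hoeffding bound suffices.
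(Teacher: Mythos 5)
Your proof is correct and uses the same core argument as the paper: unbiasedness via Theorem~\ref{thm:Dshapley}, Hoeffding's inequality on the i.i.d.\ sequence $\Delta_z\pot(S_t) \in [-1,1]$, and a union bound over $Z$. The paper obtains the statement as the uniform-weight, trivial-stability ($\beta(k)=1$, $w_k=1/m$) special case of the more general Theorem~\ref{thm:general:biased}, but that specialization reduces to precisely the direct Hoeffding-plus-union-bound calculation you give.
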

\begin{remark}
When understanding this (and future) formal approximation guarantees, it is important to note that we take $\eps$ to be an \emph{absolute} additive error.
Recall, however, that $\val(z;\pot,\D,m)$ is normalized by $m$; thus, as we take $m$ larger, the \emph{relative} error incurred by a fixed $\eps$ error grows.
In this sense, $\eps$ should typically scale inversely as $O(1/m)$.
\end{remark}

The claim follows by proving uniform convergence of the estimates for each $z \in Z$.
Importantly, while the samples in each iteration are correlated across $z,z' \in Z$,  fixing $z \in Z$, the samples $\Delta_z\pot(S_t)$ are independent across iterations.
We include a formal analysis in Appendix~\ref{app:alg}.

\subsection{Speeding up $\D$-Shapley: theoretical and practical considerations}
\label{sec:alg:speedup}

Next, we propose two principled ways to speed up the baseline estimation algorithm.
Under stability assumptions, the strategies maintain strong formal guarantees on the quality of the learned valuation.
We also develop some guiding theory addressing practical issues that arise from the need to sample from $\D$.
Somewhat counterintuitively, we argue that given only a fixed finite data set $B \sim \D^M$, we can still estimate values $\val(z;\pot,\D,m)$ to high accuracy, for $M$ that grows modestly with $m$.

\paragraph{Subsampling data and interpolation.}
Theorem~\ref{thm:stable:points} shows that for sufficiently stable potentials $\pot$, similar points have similar distributional Shapley values.
This property of distributional Shapley values is not only useful for inferring the values of points $z \in \Z$ that were not in our original data set, but also suggests an approach for speeding up the computations of values for a fixed $Z \subseteq \Z$.
In particular, to estimate the values for $z \in Z$ (with respect to a sufficiently Lipschitz-stable potential $\pot$) to $O(\eps)$-precision, it suffices to estimate the values for an $\eps$-cover of $Z$, and interpolate (e.g.\ via nearest neighbor search).
Standard arguments show that random sampling is an effective way to construct an $\eps$-cover \cite{har2011geometric}.

As our first optimization, in Algorithm~\ref{alg:fast}, we reduce the number of points to valuate through subsampling.
Given a data set $Z$ to valuate, we first choose a random subset $Z_p \subseteq Z$ (where each $z \in Z$ is subsampled into $Z_p$ i.i.d.\ with some probability $p$); then, we run our estimation procedure on the points in $Z_p$; finally, we train a regression model on $(z,\val_T(z))$ pairs from $Z_p$ to predict the values of the points from $Z \setminus Z_p$.
By varying the choice of $p \in [0,1]$, we can trade-off running time for quality of estimation: $p\approx 1$ recovers the original \textsc{$\D$-Shapley} scheme, whereas $p\approx 0$ will be very fast but likely produce noisy valuations.

\paragraph{Importance sampling for smaller data sets.}
To understand the running time of Algorithm~\ref{alg} further, we denote the time to evaluate $\pot$ on a set of cardinality $k \in \N$ by $R(k)$.\footnote{We assume that the running time to evaluate $\pot(S)$ is a function of the cardinality of $S$ (and not other auxiliary parameters).}
As such, we can express the asymptotic expected running time as $\card{Z} \cdot T \cdot \E_{k \sim [m]}\left[R(k)\right]$.
Note that when $\pot(S)$ corresponds to the accuracy of a model trained on $S$, the complexity of evaluating $\pot(S)$ may grow significantly with $\card{S}$.
At the same time,
as the data set size $k$ grows, the marginal effect of adding $z \in Z$ to the training set tends to decrease; thus, we should need fewer large samples to accurately estimate the marginal effects.
Taken together, intuitively,
biasing the sampling of $k \in [m]$ towards smaller training sets could result in a faster estimation procedure with similar approximation guarantees.

Concretely, rather than sampling $k \sim [m]$ uniformly, we can importance sample each $k$ proportional to some non-uniform weights $\set{w_k : k \in [m]}$, where the weights decrease for larger $k$.
More formally, we weight the draw of $k$ based on the stability of $\pot$.
Algorithm~\ref{alg:fast} takes as input a set of importance weights $w = \set{w_k}$ and samples $k$ proportionally; without loss of generality, we assume $\sum_{k} w_k = 1$ and let $k \sim [m]_w$ denote a sample drawn such that $\Pr[k] = w_k$.
We show that for the right choice of weights $w$, sampling $k \sim [m]_w$ improves the overall running time, while maintaining $\eps$-accurate unbiased estimates of the values $\val(z;\pot,\D,m)$.

\begin{theorem}[Informal]
\label{thm:biased}
Suppose $\pot$ is $O(1/k)$-deletion stable and can be evaluated on sets of cardinality $k$ in time $R(k) \ge \Omega(k)$.
For $p \in [0,1]$ and $w = \set{w_k \propto 1/k}$,
Algorithm~\ref{alg:fast} produces estimates that with probability $1-\delta$, are $\eps$-accurate for all $z \in Z_p$ and
runs in expected time
\begin{align*}
RT_w(m) &\le \tilde{O}\left(p\cdot \card{Z} \cdot \frac{\log(\card{Z}/\delta)\cdot R(m)}{\eps^2m^2}\right).
\end{align*}
\end{theorem}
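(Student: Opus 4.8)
The plan is to run the standard ``unbiased estimator plus concentration'' argument for each $z \in Z_p$, tracking carefully how the importance weights interact with deletion stability, and then to bound the expected per-iteration cost separately. Write $H_m = \sum_{k=1}^m 1/k = \Theta(\log m)$, so that $w_k = (1/k)/H_m$ and the reweighting factor is $\rho_k \triangleq (1/m)/w_k = kH_m/m$. Since $k$ is now drawn non-uniformly, to stay unbiased Algorithm~\ref{alg:fast} must average the \emph{reweighted} marginals $X_t^{(z)} \triangleq \rho_{k_t}\cdot\Delta_z\pot(S_t)$ across iterations $t = 1,\dots,T$, returning $\val_T(z) = \frac{1}{T}\sum_{t=1}^{T} X_t^{(z)}$, where iteration $t$ draws $k_t \sim [m]_w$ and then $S_t \sim \D^{k_t-1}$, and $\Delta_z\pot(S) \triangleq \pot(S\cup\set{z}) - \pot(S)$. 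The first step is to check unbiasedness: for fixed $z$, $\E[X_t^{(z)}] = \sum_{k=1}^m w_k\,\rho_k\,\E_{S\sim\D^{k-1}}[\Delta_z\pot(S)] = \frac{1}{m}\sum_{k=1}^m\E_{S\sim\D^{k-1}}[\Delta_z\pot(S)]$, which equals $\val(z;\pot,\D,m)$ by Theorem~\ref{thm:Dshapley}.

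The crux is to bound $\card{X_t^{(z)}}$, which is exactly where the choice $w_k\propto 1/k$ pays off. Since $\pot$ is $O(1/k)$-deletion stable, $\card{\Delta_z\pot(S_t)} \le \beta(k_t) = O(1/k_t)$, so $\card{X_t^{(z)}} = \rho_{k_t}\,\card{\Delta_z\pot(S_t)} \le \tfrac{k_t H_m}{m}\cdot O(1/k_t) = O(H_m/m) = \tilde{O}(1/m)$, uniformly over the random draw of $k_t$. (The uniform scheme has $\rho_k\equiv 1$ and estimator range $\beta(1)=\Theta(1)$; matching the sampling weights to the stability profile $\beta(\cdot)$ is precisely what collapses the range down to the $\tilde{O}(1/m)$ scale of $\val$ itself.) Then, fixing $z$, the variables $X_1^{(z)},\dots,X_T^{(z)}$ are i.i.d.\ --- the iterations are independent --- with common mean $\val(z;\pot,\D,m)$ and absolute value $\tilde{O}(1/m)$, so Hoeffding's inequality gives $\Pr\big[\,\card{\val_T(z)-\val(z;\pot,\D,m)} > \eps\,\big] \le 2\exp\!\big(-\Omega(T\eps^2 m^2 / H_m^2)\big)$. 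Taking $T = \tilde{O}\big(\log(\card{Z}/\delta)/(\eps^2 m^2)\big)$ pushes this below $\delta/\card{Z}$, and a union bound over the at most $\card{Z}$ points of $Z_p$ gives $\eps$-accuracy for all $z\in Z_p$ with probability at least $1-\delta$.

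For the running time, iteration $t$ costs $O(R(k_t))$ for the single evaluation $\pot(S_t)$ plus $\card{Z_p}\cdot O(R(k_t))$ for the evaluations $\pot(S_t\cup\set{z})$ over $z\in Z_p$; drawing $S_t\sim\D^{k_t-1}$ costs $\Theta(k_t)$, which is absorbed using $R(k_t)\ge\Omega(k_t)$. Since $R$ is non-decreasing, $\E_{k\sim[m]_w}[R(k)] = \frac{1}{H_m}\sum_{k=1}^m R(k)/k \le \frac{1}{H_m}\cdot R(m)\cdot H_m = R(m)$. The subsample $Z_p$ is chosen once, independently of the iteration randomness, with $\E[\card{Z_p}] = p\card{Z}$, so the expected total time is $O\big(\E[\card{Z_p}]\cdot T\cdot\E_{k\sim[m]_w}[R(k)]\big) \le \tilde{O}\big(p\,\card{Z}\cdot\log(\card{Z}/\delta)\cdot R(m)/(\eps^2 m^2)\big)$, matching the claimed $RT_w(m)$; fitting the regression model on the $\card{Z_p}$ labelled pairs is a lower-order additive term.

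The step I expect to be the real obstacle is the range estimate of the second paragraph: one needs to recognize that ``$O(1/k)$-deletion stability'' is precisely the hypothesis under which the weights $w_k\propto 1/k$ can at once preserve unbiasedness after reweighting and pull the estimator's worst-case magnitude from $\Theta(1)$ down to $\tilde{O}(1/m)$ --- which is what converts the $\Omega(\eps^{-2})$ iteration count of Theorem~\ref{thm:iterations:uniform} into the $\tilde{O}(\eps^{-2}m^{-2})$ here. Everything else --- the unbiasedness identity, Hoeffding, the union bound, and the elementary inequality $\sum_{k\le m} R(k)/k \le H_m\,R(m)$ --- should be routine.
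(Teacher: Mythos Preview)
Your proposal is correct and follows essentially the same route as the paper: importance-sample $k\sim[m]_w$, reweight by $1/(w_k m)$ to stay unbiased, bound the reweighted marginals using deletion stability, apply Hoeffding and a union bound to get the iteration count, then multiply by the expected per-iteration cost and $\E[\card{Z_p}]=p\card{Z}$. The paper packages this as a general bound $RT_w(m)\le O\big(p\card{Z}\cdot\frac{\log(\card{Z}/\delta)}{\eps^2 m^2}\cdot(\sum_k\beta(k)^2/w_k)\cdot(\sum_k w_k R(k))\big)$ for arbitrary $\beta$ and $w$, and then specializes to $\beta(k)=1/k$, $w_k\propto 1/k$; you instead work directly in the specialized setting, which lets you observe the clean uniform bound $\card{X_t^{(z)}}=\tilde{O}(1/m)$ and run a single i.i.d.\ Hoeffding rather than the paper's grouped-by-$k$ version. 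Your bound $\E_{k\sim[m]_w}[R(k)]\le R(m)$ via monotonicity of $R$ is a slightly different (and simpler) way to control the second factor than the paper's explicit $\sum_k w_k k^c$ computation, but both land on the same $\tilde{O}(R(m))$.
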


To interpret this result, note that if the subsampling probability $p$ is large enough that $Z_p$ will $\eps$-cover $Z$, then using a nearest-neighbor predictor as $\mathcal{R}$ will produce $O(\eps)$-estimates for all $z \in Z$.
Further, if we imagine $\eps = \Theta(1/k)$, then the computational cost grows as the time it takes to train a model on $m$ points scaled by a factor logarithmic in $\card{Z}$ and the failure probability.
In fact, Theorem~\ref{thm:biased} is a special case of a more general theorem that provides a recipe for devising an appropriate sampling scheme based on the stability of the potential $\pot$.
In particular, the general theorem (stated and proved in Appendix~\ref{app:alg}) shows that the more stable the potential, the more we can bias sampling in favor of smaller sample sizes.

\begin{algorithm}[t!]
\caption{\label{alg:fast}~\textsc{Fast-$\D$-Shapley}}

\textbf{Fix:} \emph{potential $\pot:\Z^* \to [0,1]$; distribution $\D$; $m \in \N$}

\textbf{Given:} \emph{valuation set $Z \subseteq \Z$; database $B \sim \D^M$; \#~iterations $T \in \N$;\\subsampling rate $p \in [0,1]$; importance weights~$\set{w_k}$; regression algorithm $\mathcal{R}$}

\begin{algorithmic}
\STATE Subsample $Z_p \subseteq Z$ s.t.\ $z \in Z_p$ w.p.\ $p$ for all $z \in Z$
\FOR{$z \in Z_p$}
\STATE $\val_1(z) \gets 0$\hfill\texttt{// initialize estimates}
\ENDFOR
\FOR{$t = 1,\hdots,T$}
\STATE Sample $S_t \sim B^{k-1}$ for $k \sim [m]_w$
\FOR{$z \in Z_p$}
\STATE $\Delta_z\pot(S_t) \gets \pot(S_t \cup \set{z}) - \pot(S_t)$
\STATE $\val_{t+1}(z) \gets \frac{1}{t}\cdot \frac{\Delta_z\pot(S_t)}{w_k m} + \frac{t-1}{t}\cdot \val_t(z)$
\\\hfill\texttt{// update unbiased estimate}
\ENDFOR
\ENDFOR
\STATE $h \gets \mathcal{R}\left(\set{(z,\val_T(z)) : z \in Z_p}\right)$
\\\hfill\texttt{// regress on (z,val(z)) pairs}
\STATE \textbf{return} $\set{(z,h(z)) : z \in Z}$
\end{algorithmic}
\end{algorithm}

\paragraph{Estimating distributional Shapley from data.}
Estimating distributional Shapley values $\val(z;\pot,\D,m)$ requires samples from the distribution $\D$.
In practice, we often want evaluate the values with respect to a distribution $\D$ for which we only have some database $B \sim \D^M$ for some large (but finite) $M \in \N$.
In such a setting, we need to be careful; indeed, avoiding artifacts from a single draw of data is the principle motivation for introducing the distributional Shapley framework.
In fact, the analysis of Theorem~\ref{thm:biased} also reveals an upper bound on how big the database should be in order to obtain accurate estimates with respect to $\D$.
As a concrete bound, if $\pot$ is $O(1/k)$-deletion stable and we take $\eps = \Theta(1/m)$ error, then the database need only be
\begin{equation*}
M \le \tilde{O}\left(m \cdot \log(\card{Z}/\delta)\right).
\end{equation*}
In other words, for a sufficiently stable potential $\pot$,
the data complexity grows modestly with $m$.
Note that, again, this bound leverages the fact that in every iteration, we reuse the same sample $S_t \sim \D^k$ for each $z \in Z$.
See Appendix~\ref{app:alg} for a more detailed analysis.

In practice, we find that sampling subsets of data from the database with replacement works well; we describe the full procedure in Algorithm~\ref{alg:fast}, where we denote an i.i.d.\ sample of $k$ points drawn uniformly from the database as $S \sim B^k$.
Finally, we note that ideally, $m$ should be close to the size of the training sets that model developers to use; in practice, these data set sizes may vary widely.
One appealing aspect of both \textsc{$\D$-Shapley} algorithms is that when we estimate values with respect to $m$, the samples we obtain also allow us to simultaneously estimate $\val(z;\pot,\D,m')$ for any $m' \le m$.
Indeed, we can simply truncate our estimates to only include samples corresponding to $S_t$ with $\card{S_t} \le m'$.

\subsection{Empirical performance}
\label{sec:alg:empirical}

We investigate the empirical effectiveness of the distributional Shapley framework by running experiments in three settings on large real-world data sets.
The first setting uses the UK Biobank data set, containing the genotypic and phenotypic data of individuals in the UK~\cite{sudlow2015ukb}; we evaluate a task of predicting whether the patient will be diagnosed with breast cancer using 120 features.
Overall, our data has 10K patients (5K diagnosed positively); we use 9K patients as our database ($B$), and take classification accuracy on a hold-out set of 500 patients as the performance metric ($\pot$).
The second data set is Adult Income where the task is to predict whether income exceeds $\$50$K/yr given 14 personal features~\cite{Dua:2019}.
With 50K individuals total, we use 40K as our database, and classification accuracy on 5K individuals as our performance metric.
In these two experiments, we take the maximum data set size $m = 1$K and $m=5$K, respectively.

For both settings, we first run \textsc{$\D$-Shapley} without optimizations as a baseline.
As a point of comparison, in these settings the computational cost of this baseline is on the same order as running the \textsc{TMC-Shapley} algorithm of \cite{datashapley} that computes the data Shapley values $\sh(z;\pot,B)$ for each $z$ in the data set $B$.
Given this baseline,
we evaluate the effectiveness of the proposed optimizations, using weighted sampling and interpolation (separately), for various levels of computational savings.
In particular, we vary the sampling weights $\set{w_k}$ and subsampling probability $p$ to vary the computational cost (where weighting towards smaller $k$ and taking $p$ smaller each yield more computational savings).
All algorithms are truncated when the average absolute change in value in the past $100$ iterations is less than $1\%$.

\begin{figure}[b!]
\centering
\includegraphics[width=\linewidth]{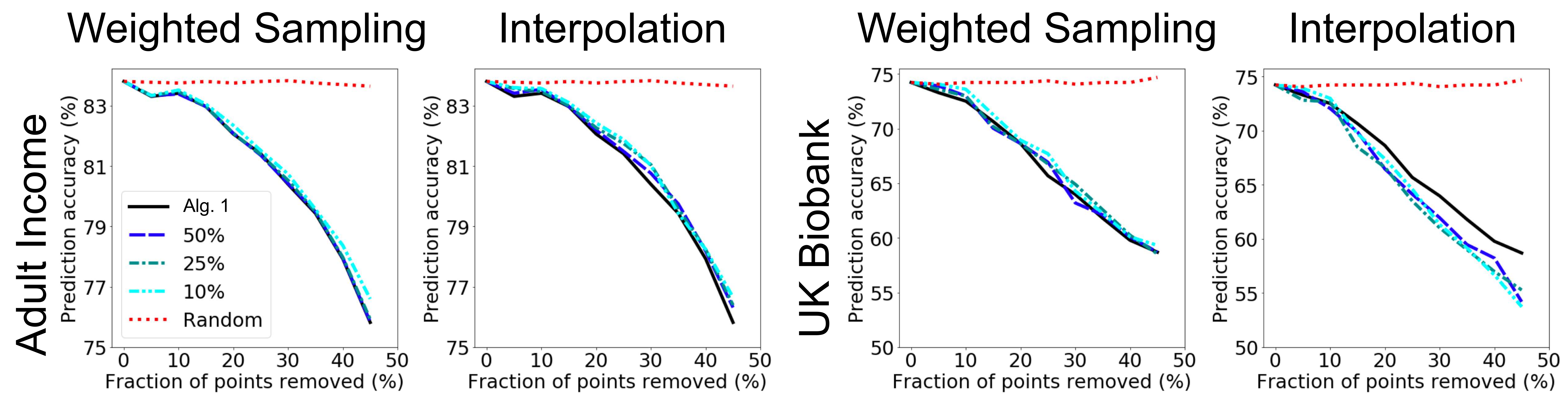} 
\caption{\textbf{Point removal performance.}
Given a data set and task, we iteratively a point, retrain the model, and evaluate its performance.
Each curve corresponds to a different point removal order, based on the estimated distributional Shapley values (compared to random).
For example, the $10\%$ curve correspond to estimating values with $10\%$ of the baseline computation of Algorithm~\ref{alg}.
We plot classification accuracy vs.\ fraction of data points removed from the training set, for each task and each optimization method.
\label{fig:logistic}}
\end{figure}

To evaluate the quality of the distributional Shapley estimates, we perform a point removal experiment, as proposed by \cite{datashapley}, where given a training set, we iteratively remove points, retrain the model, and observe how the performance changes.
In particular, we remove points from most to least valuable (according to our estimates), and compare to the baseline of removing random points.
Intuitively, removing high value data points should result in a more significant drop in the model's performance.
We report the results of this point removal experiment using the values determined using the baseline Algorithm~\ref{alg}, as well as various factor speed-ups (where $t\%$ refers to the computational cost compared to baseline).

\begin{figure}[t!]
\centering
\includegraphics[width=\linewidth]{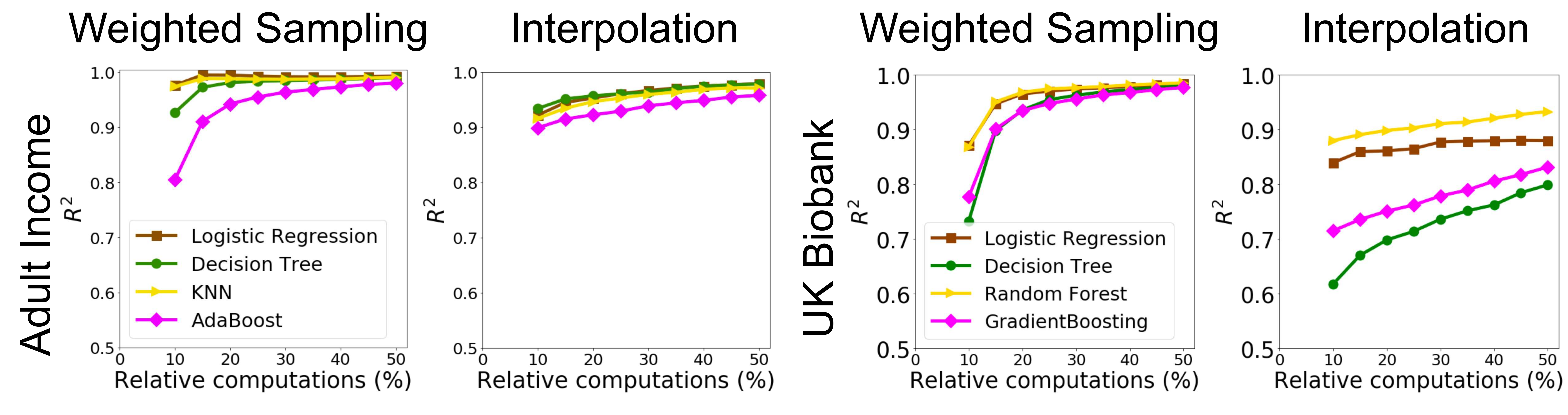} 
\caption{\textbf{Smooth trade-off between computation and recovery.}
For each task, we plot the $R^2$ coefficient between the values computed using Algorithm~\ref{alg} vs.\ the relative computational cost (as in Figure~\ref{fig:logistic}).
The results show that there is a smooth trade-off between the recovery precision of the distributional Shapley values and the cost, across a wide range of learning algorithms.
\label{fig:speedup}}
\end{figure}

As Figure~\ref{fig:logistic} demonstrates, when training a logistic regression model, removing the high distributional Shapley valued points causes a sharp decrease in accuracy on both tasks, even when using the most aggressive weighted sampling and interpolation optimizations.
Appendix~\ref{app:exp} reports the results for various other models.
As a finer point of investigation, we report the correlation between the estimated values without optimizations and with various levels of computational savings, for a handful of prediction models.
Figure~\ref{fig:speedup} plots the $R^2$ curves and shows that the optimizations provide a smooth interpolation between computational cost and recovery, across every model type.
It is especially interesting that these trade-offs are consistently smooth across a variety of models using the $01$-loss, which do not necessarily induce a potential $\pot$ with formal guarantees of stability.

In our final setting, we push the limits of what types of data can be valuated.
Specifically, by combining both weighted sampling and interpolation (resulting in a $500\times$ speed-up), we estimate the values of $50$K images from the CIFAR10 data set; valuating this data set would be prohibitively expensive using prior Shapley-based techniques.
In particular, to obtain accurate estimates for each point, \textsc{TMC-Shapley} would require an unreasonably large number of Monte Carlo iterations due to the sheer size of the data base to valuate.
We valuate points based on an image classification task,
and demonstrate that the estimates identify highly valuable points, in Appendix~\ref{app:exp}.

\section{Case Study: Consistently Pricing Data}
\label{sec:case}

Next, we consider a natural setting where a data broker wishes to sell data to various buyers. Each buyer could already own some private data.
In particular, suppose the broker plans to sell the set $S$ and a buyer holds a private data set $B$;
in this case, the relevant values are the data Shapley values $\sh(z;\pot,B \cup S)$ for each $z \in S$.
Within the original data Shapley framework, computing these values requires a single party to hold both $B$ and $S$.
For a multitude of financial and legal concerns, neither party may be willing to send their data to the other before agreeing to the purchase.
Such a scenario represents a fundamental limitation of the non-distributional Shapley framework that seemed to jeopardize its practical viability.
We argue that the distributional Shapley framework largely resolves this particular issue:
without exchanging data up front, the broker simply estimates the values $\val(z;\pot,\D,m)$; in expectation, these values will accurately reflect the value to a buyer with a private data set $B$ drawn from a distribution close to $\D$.

\begin{figure}[t!]
\centering
\includegraphics[width=\linewidth]{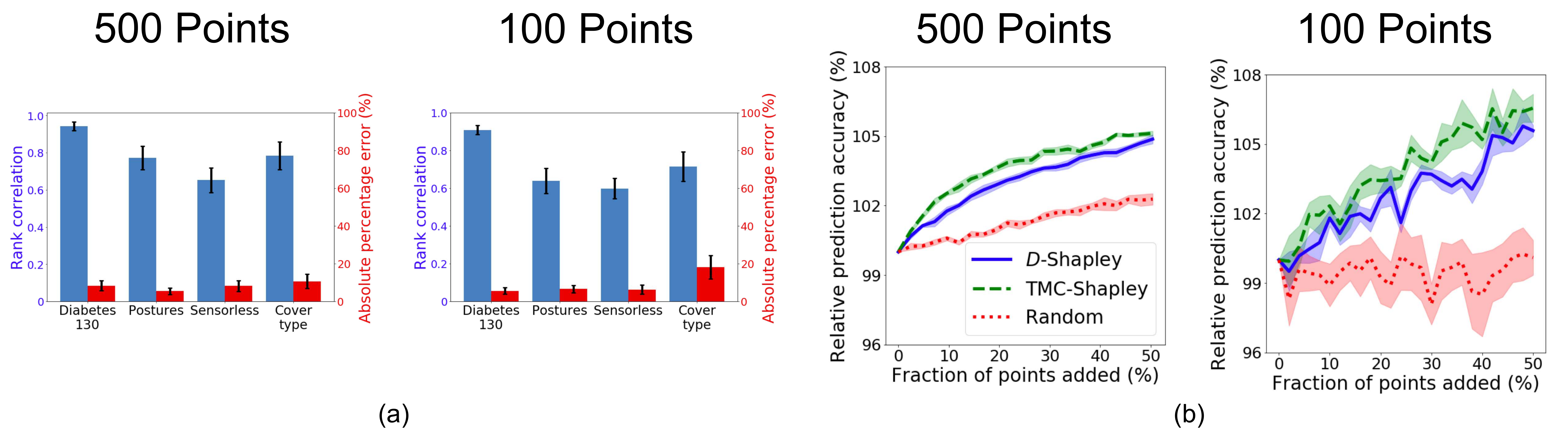} 
\caption{\textbf{Consistent Pricing.}
Each buyer holds a data set $B$; the seller sells a data set $S$, where $\card{B} = \card{S} = m$.
We compare the values estimated by the seller $\val(z;\pot,\D,m)$ and $\sh(z;\pot,B \cup S)$.
(a)  For various data sets and two data set sizes ($m = 100$ and $m = 500$): in blue, we plot the average rank correlation between $\val(z)$ and $\sh(z)$ for $z \in S$; in red, we plot
the average absolute percentage error between the seller's and buyer's estimates.
(b) Points from $S$ are added to $B$ in three different orders: according to $\val$ ($\D$-Shapley), according to $\sh$ (TMC), and randomly.
The plot shows the change in the accuracy of the model, relative to its performance using the buyer's initial dataset, as the points are added; shading indicates standard error of the mean.
\label{fig:selling}}
\end{figure}

We report the results of this case study on four large different data sets in Figure~\ref{fig:selling}, whose details are included in Appendix~\ref{app:case}.
For each data set, a set of buyers holds a small data set $B$ ($100$ or $500$ points), and the broker sells them a data set $S$ of the same size; the buyers then valuate the points in $S$ by running the \textsc{TMC-Shapley} algorithm of \cite{datashapley} on $B \cup S$.
In Figure~\ref{fig:selling}(a), we show that the rank correlation between the broker's distributional estimates $\val(z;\pot,\D,m)$ and the buyer's observed values $\sh(z;\pot,B \cup S)$ is generally high.
Even when the rank correlation is a bit lower ($\approx 0.6$), the broker and buyer agree on the value of the set as a whole.
Specifically, we observe that the seller's estimates are approximately unbiased,
and the absolute percentage error is low, where\\
\begin{equation*}
APE =  \dfrac{\card{\sum_{z \in S} \val(z;\pot,\D,m) - \sh(z;\pot,B \cup S)}}{\sum_{z \in S} \val(z;\pot,\D,m)}.
\end{equation*}
In Figure~\ref{fig:selling}(b), we show the results of a point addition experiment for the Diabetes130 data set.
Here, we consider the effect of adding the points of $S$ to $B$ under three different orderings: according to the broker's estimates $\val(z;\pot,\D,m)$, according to the buyer's estimates $\sh(z;\pot,B \cup S)$, and under a random ordering.
We observe that the performance (classification accuracy) increase by adding the points according to $\val(z)$ and according to $\sh(z)$ track one another well; after the addition of all of $S$, the resulting models achieve essentially the same performance and considerably outperforming random.
We report results for the other data sets in Appendix~\ref{app:case}.

\section{Discussion}
\label{sec:discuss}

The present work makes significant progress on understanding statistical aspects in determining the value of data.
In particular, by reformulating the data Shapley value as a distributional quantity, we obtain a valuation function that does not depend on a fixed data set; reducing the dependence on the specific draw of data eliminates inconsistencies in valuation that can arise to sampling artifacts.
Further, we demonstrate that the distributional Shapley framework provides an avenue to valuate data across a wide variety of tasks, providing stronger theoretical guarantees and orders of magnitude speed-ups over prior estimation schemes.
In particular, the stability results that we prove for distributional Shapley (Theorems~\ref{thm:stable:points} and \ref{thm:stable:dist}) are not generally true for the original data Shapley due to its dependence on a fixed dataset.

One outstanding limitation of the present work is the reliance on a known task, algorithm, and performance metric (i.e.\ taking the potential $\pot$ to be fixed).
We propose reducing the dependence on these assumptions as a direction for future investigations; indeed, very recent work has started to chip away at the assumption that the learning algorithm is fixed in advance \cite{yona2019s}.

The distributional Shapley perspective also raises
the thought-provoking research question of
whether we can valuate data while protecting the privacy of individuals who contribute their data.
One severe limitation of the data Shapley framework, is that the value of every point depends nontrivially on every other point in the data set.
In a sense, this makes the data Shapley value an inherently non-private value:  the estimate of $\sh(z;\pot,B)$ for a point $z \in B$ reveals information about the other points in $B$.
By marginalizing the dependence on the data set, the distributional Shapley framework opens the door for to estimating data valuations while satisfying strong notions of privacy, such as differential privacy \cite{dworkDP}.
Such an estimation scheme could serve as a powerful tool amidst increasing calls to ensure the privacy of and compensate individuals for their personal data \cite{datacoops}.

\clearpage
\bibliographystyle{alpha}
\bibliography{shapley}

\newcommand{\etalchar}[1]{$^{#1}$}
\begin{thebibliography}{JDW{\etalchar{+}}19b}

\bibitem[ADS19]{agarwal2019marketplace}
Anish Agarwal, Munther Dahleh, and Tuhin Sarkar.
\newblock A marketplace for data: An algorithmic solution.
\newblock In {\em Proceedings of the 2019 ACM Conference on Economics and
  Computation}, pages 701--726, 2019.

\bibitem[Aga11]{agarwalnotes}
Shivani Agarwal.
\newblock Algorithmic stability.
\newblock Lecture notes on Statistical Learning Theory, 2011.

\bibitem[AS74]{aumann}
Robert~J Aumann and Lloyd~S Shapley.
\newblock {\em Values of non-atomic games}.
\newblock Princeton University Press, 1974.

\bibitem[BE02]{bousquet2002stability}
Olivier Bousquet and Andr{\'e} Elisseeff.
\newblock Stability and generalization.
\newblock {\em Journal of machine learning research}, 2(Mar):499--526, 2002.

\bibitem[CDR07]{cohen2007shap4}
Shay Cohen, Gideon Dror, and Eytan Ruppin.
\newblock Feature selection via coalitional game theory.
\newblock {\em Neural Computation}, 19(7):1939--1961, 2007.

\bibitem[CSWJ18]{chen2018shapley}
Jianbo Chen, Le~Song, Martin~J Wainwright, and Michael~I Jordan.
\newblock L-shapley and c-shapley: Efficient model interpretation for
  structured data.
\newblock {\em arXiv preprint arXiv:1808.02610}, 2018.

\bibitem[DG17]{Dua:2019}
Dheeru Dua and Casey Graff.
\newblock {UCI} machine learning repository, 2017.

\bibitem[DMNS06]{dworkDP}
Cynthia Dwork, Frank McSherry, Kobbi Nissim, and Adam Smith.
\newblock Calibrating noise to sensitivity in private data analysis.
\newblock In {\em Theory of cryptography conference}, pages 265--284, 2006.

\bibitem[DSZ16]{datta2016shap2}
Anupam Datta, Shayak Sen, and Yair Zick.
\newblock Algorithmic transparency via quantitative input influence: Theory and
  experiments with learning systems.
\newblock In {\em Security and Privacy (SP), 2016 IEEE Symposium on}, pages
  598--617. IEEE, 2016.

\bibitem[GAZ17]{ghorbani2017interpretation}
Amirata Ghorbani, Abubakar Abid, and James Zou.
\newblock Interpretation of neural networks is fragile.
\newblock {\em arXiv preprint arXiv:1710.10547}, 2017.

\bibitem[GGVZ19]{ginart2019making}
Antonio Ginart, Melody Guan, Gregory Valiant, and James~Y Zou.
\newblock Making ai forget you: Data deletion in machine learning.
\newblock In {\em Advances in Neural Information Processing Systems}, pages
  3513--3526, 2019.

\bibitem[GZ19]{datashapley}
Amirata Ghorbani and James Zou.
\newblock Data shapley: Equitable valuation of data for machine learning.
\newblock In {\em International Conference on Machine Learning}, pages
  2242--2251, 2019.

\bibitem[HP11]{har2011geometric}
Sariel Har-Peled.
\newblock {\em Geometric approximation algorithms}.
\newblock Number 173. American Mathematical Soc., 2011.

\bibitem[JDW{\etalchar{+}}19a]{jia2019efficient}
Ruoxi Jia, David Dao, Boxin Wang, Frances~Ann Hubis, Nezihe~Merve Gurel, Bo~Li,
  Ce~Zhang, Costas Spanos, and Dawn Song.
\newblock Efficient task-specific data valuation for nearest neighbor
  algorithms.
\newblock {\em Proceedings of the VLDB Endowment}, 12(11):1610--1623, 2019.

\bibitem[JDW{\etalchar{+}}19b]{jia2019towards}
Ruoxi Jia, David Dao, Boxin Wang, Frances~Ann Hubis, Nick Hynes, Nezihe~Merve
  G{\"u}rel, Bo~Li, Ce~Zhang, Dawn Song, and Costas~J Spanos.
\newblock Towards efficient data valuation based on the shapley value.
\newblock In {\em The 22nd International Conference on Artificial Intelligence
  and Statistics}, pages 1167--1176, 2019.

\bibitem[K{\etalchar{+}}10]{kononenko2010shap1}
Igor Kononenko et~al.
\newblock An efficient explanation of individual classifications using game
  theory.
\newblock {\em Journal of Machine Learning Research}, 11(Jan):1--18, 2010.

\bibitem[KPR01]{kleinberg2001value}
Jon Kleinberg, Christos~H Papadimitriou, and Prabhakar Raghavan.
\newblock On the value of private information.
\newblock In {\em Theoretical Aspects Of Rationality And Knowledge: Proceedings
  of the 8 th conference on Theoretical aspects of rationality and knowledge},
  volume~8, pages 249--257. Citeseer, 2001.

\bibitem[LL17]{lundberg2017shap3}
Scott~M Lundberg and Su-In Lee.
\newblock A unified approach to interpreting model predictions.
\newblock In {\em Advances in Neural Information Processing Systems}, pages
  4765--4774, 2017.

\bibitem[LNG19]{datacoops}
Katrina Ligett, Kobbi Nissim, and Ayelet {Gordon-Tapiero}.
\newblock Data co-ops.
\newblock https://csrcl.huji.ac.il/book/data-co-ops, 2019.

\bibitem[RDS{\etalchar{+}}15]{russakovsky2015imagenet}
Olga Russakovsky, Jia Deng, Hao Su, Jonathan Krause, Sanjeev Satheesh, Sean Ma,
  Zhiheng Huang, Andrej Karpathy, Aditya Khosla, Michael Bernstein, et~al.
\newblock Imagenet large scale visual recognition challenge.
\newblock {\em International Journal of Computer Vision}, 115(3):211--252,
  2015.

\bibitem[SDG{\etalchar{+}}14]{strack2014impact}
Beata Strack, Jonathan~P DeShazo, Chris Gennings, Juan~L Olmo, Sebastian
  Ventura, Krzysztof~J Cios, and John~N Clore.
\newblock Impact of hba1c measurement on hospital readmission rates: analysis
  of 70,000 clinical database patient records.
\newblock {\em BioMed research international}, 2014, 2014.

\bibitem[SGA{\etalchar{+}}15]{sudlow2015ukb}
Cathie Sudlow, John Gallacher, Naomi Allen, Valerie Beral, Paul Burton, John
  Danesh, Paul Downey, Paul Elliott, Jane Green, Martin Landray, et~al.
\newblock Uk biobank: an open access resource for identifying the causes of a
  wide range of complex diseases of middle and old age.
\newblock {\em PLoS medicine}, 12(3):e1001779, 2015.

\bibitem[Sha53]{shapley1953value}
Lloyd~S Shapley.
\newblock A value for n-person games.
\newblock {\em Contributions to the Theory of Games}, 2(28):307--317, 1953.

\bibitem[SR{\etalchar{+}}88]{shapley1988shapley}
Lloyd~S Shapley, Alvin~E Roth, et~al.
\newblock {\em The Shapley value: essays in honor of Lloyd S. Shapley}.
\newblock Cambridge University Press, 1988.

\bibitem[SVI{\etalchar{+}}16]{szegedy2016inception}
Christian Szegedy, Vincent Vanhoucke, Sergey Ioffe, Jon Shlens, and Zbigniew
  Wojna.
\newblock Rethinking the inception architecture for computer vision.
\newblock In {\em Proceedings of the IEEE conference on computer vision and
  pattern recognition}, pages 2818--2826, 2016.

\bibitem[YGZ19]{yona2019s}
Gal Yona, Amirata Ghorbani, and James Zou.
\newblock Who's responsible? jointly quantifying the contribution of the
  learning algorithm and training data.
\newblock {\em arXiv preprint arXiv:1910.04214}, 2019.

\end{thebibliography}

\setcounter{figure}{0}    
\renewcommand{\figurename}{Supplementary Figure}

\clearpage
\appendix

\section{Review of Shapley Axioms}
\label{app:axioms}

Here, we provide a high-level review of the axioms that Shapley used to describe an equitable valuation function  \cite{shapley1953value}.
We consider the data Shapley setting, letting $\sh(z;U,B)$ denote the value of $z \in B$ for a finite subset $B \subseteq \Z$ with respect to potential $\pot:\Z^* \to [0,1]$.
\begin{itemize}
    \item \emph{Symmetry} -- Consider $z_i,z_j \in B$; suppose for all $S \subseteq B\setminus \set{z_i,z_j}$, $\pot(S \cup \set{z_i}) = \pot(S \cup \set{z_j})$.
    Then, $$\sh(z_i;U,B) = \sh(z_j;U,B).$$
    That is, if two data points are equivalent, then they should receive the same value.
    \item \emph{Null player} -- Consider $z \in B$; suppose for all $S \subseteq B \setminus \set{z}$, $\pot(S \cup \set{z}) = \pot(S)$.  Then, $$\sh(z;U,B) = 0.$$
    That is, if a data point contributes no marginal gain in potential to any nontrivial subset, then it receives no value.
    \item \emph{Additivity} -- Consider two potentials $\pot_1,\pot_2$. For all $z \in B$, $$\sh(z;\pot_1+\pot_2,B) = \sh(z;\pot_1,B) + \sh(z;\pot_2,B).$$
    That is, the value of a data point with respect to the combination of two tasks (addition of two potentials) is the sum of the values with respect to each task (potential) separately.
\end{itemize}
\begin{theorem}[\cite{shapley1953value}~]
The Shapley value is the unique valuation function that satisfies the symmetry, null player, and additivity axioms.
\end{theorem}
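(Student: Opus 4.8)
The plan is to prove the two implications separately. For \emph{existence}, I would check that the valuation defined by the explicit formula satisfies every axiom; for \emph{uniqueness}, I would show the axioms admit no other valuation. As is standard, I also invoke the \emph{efficiency} axiom $\sum_{z\in B}\sh(z;\pot,B)=\pot(B)$ mentioned in Section~\ref{sec:dshapley}; without it the remaining three axioms do not pin down the value, since any constant rescaling of $\sh$ also satisfies symmetry, the null-player property, and additivity. Fix $B$ with $\card{B}=m$. Since $\sh(z;\pot,B)$ depends on $\pot$ only through the marginals $\pot(S\cup\set z)-\pot(S)$, I may assume $\pot(\emptyset)=0$ and work in the $(2^m-1)$-dimensional real vector space of such ``grounded'' potentials.

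The core of the uniqueness direction is the family of \emph{unanimity potentials}: for nonempty $T\subseteq B$ set $u_T(S)=1$ if $T\subseteq S$ and $u_T(S)=0$ otherwise. Möbius inversion over the subset lattice shows every grounded $\pot$ has a unique expansion $\pot=\sum_{\emptyset\neq T\subseteq B}c_T\,u_T$, so the $u_T$ form a basis. Now let $\phi$ be \emph{any} valuation obeying the axioms. On a single $u_T$: every $z\notin T$ is a null player in $u_T$ (adding it never changes the potential), so $\phi(z;u_T,B)=0$; any $z,z'\in T$ are symmetric in $u_T$, so $\phi(z;u_T,B)=\phi(z';u_T,B)$; and efficiency gives $\sum_z\phi(z;u_T,B)=u_T(B)=1$. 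Hence $\phi(z;u_T,B)=1/\card T$ for $z\in T$ and $0$ otherwise. Additivity --- together with the null-player axiom, which forces $\phi(z;\mathbf 0,B)=0$ and thus $\phi(z;-\pot,B)=-\phi(z;\pot,B)$ --- makes $\phi$ additive over rational combinations of potentials; assembling the basis pieces yields $\phi(z;\pot,B)=\sum_{T\ni z}c_T/\card T$, a quantity determined by $\pot$ alone. Since the formula-defined $\sh$ is one such $\phi$, all axiom-obeying valuations coincide with it.

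For existence I would verify the four axioms against the permutation characterization $\sh(z;\pot,B)=\E_{\pi}\!\left[\pot(P^z_\pi\cup\set z)-\pot(P^z_\pi)\right]$, where $\pi$ ranges uniformly over linear orders of $B$ and $P^z_\pi$ is the set of elements preceding $z$ in $\pi$ (a routine rearrangement identifies this with the stated weighted-subset formula). Additivity follows from linearity of $\pot$; the null-player property follows since the relevant marginals all vanish; symmetry follows because transposing two equivalent points is a measure-preserving bijection on orders that exchanges their marginal contributions; and efficiency follows because, for each fixed $\pi$, the $m$ marginals telescope to $\pot(B)-\pot(\emptyset)=\pot(B)$, which is preserved under expectation.

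The step requiring the most care is upgrading additivity to genuine $\mathbb{R}$-linearity in the uniqueness argument: the axiom directly yields only additivity over $\mathbb{Q}$-combinations of potentials, whereas the coefficients $c_T$ in the unanimity expansion can be irrational. This is the well-known subtlety in Shapley's theorem; I would resolve it exactly as Shapley does, either via the standard density/regularity argument or by restricting to a dense class of potentials (e.g.\ those taking values in the rationals, such as accuracy on a finite hold-out set), which suffices for the applications considered here.
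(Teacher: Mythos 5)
The paper does not prove this theorem; it appears in Appendix~A as a citation to Shapley's 1953 paper, so there is no in-paper argument to compare against. Evaluating your proposal on its own terms: it is a correct instance of the standard unanimity-basis proof, and you are right to flag that the theorem as the paper states it is incomplete. Without the efficiency axiom the zero valuation (and any constant multiple of $\sh$) satisfies symmetry, null player, and additivity, so uniqueness fails; efficiency must be included, and the paper's decision to list it separately as a ``further'' property rather than as one of the axioms is an imprecision your proof correctly repairs. Your existence check via the permutation characterization and your Möbius/unanimity decomposition for uniqueness are both sound.

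The one place you have overcomplicated matters is the claimed $\mathbb{Q}$-to-$\mathbb{R}$ subtlety. You do not need to upgrade additivity to $\mathbb{R}$-homogeneity of $\phi$ at all, and consequently the density/regularity machinery you invoke in the last paragraph is unnecessary. Instead, for any real coefficient $c_T$, apply the axioms directly to the scaled game $c_T u_T$: every $z\notin T$ is still a null player in $c_T u_T$ (its marginal contribution to every $S$ is $c_T\bigl(u_T(S\cup\set{z})-u_T(S)\bigr)=0$), any two $z,z'\in T$ are still symmetric in $c_T u_T$, and efficiency now gives $\sum_{z\in B}\phi(z;c_T u_T,B)=c_T\,u_T(B)=c_T$. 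This pins down $\phi(z;c_T u_T,B)=c_T/\card{T}$ for $z\in T$, and $0$ otherwise, for \emph{every} real $c_T$, with no homogeneity step. A single application of additivity over the finite sum $\pot=\sum_{T}c_T u_T$ then yields $\phi(z;\pot,B)=\sum_{T\ni z}c_T/\card{T}$ immediately. The Cauchy-equation concern you raise would be genuine if you tried to establish that $\phi$ is a linear map on the space of games, but the proof never needs that claim; it only needs the value of $\phi$ on each summand $c_T u_T$, and the axioms supply that directly.
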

Additionally, the Shapley value satisfies the desirable property that it allocates all of the value to the contributors.
\begin{itemize}
    \item \emph{Efficiency} -- The sum of the individuals' Shapley values equals the value of the coalition.
$$ \sum_{z \in B} \sh(z;U,B) = \pot(B) - \pot(\emptyset).$$
\end{itemize}

It is straightforward to verify that the distributional Shapley value immediately inherits the properties of symmetry, null player, and additivity (by linearity of expectation).
Further, it satisfies an on-average variant of efficiency.
\begin{proposition}
Given a potential $\pot$ and a data distribution $\D$, for $m \in \N$,
\begin{equation*}
\E_{z \sim \D}\left[\val(z;\pot,\D,m)\right] = \frac{\E_{B \sim \D^m}\left[\pot(B)\right] - \pot(\emptyset)}{m}.
\end{equation*}
\end{proposition}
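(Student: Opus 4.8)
The plan is to start from the marginal-contribution characterization of Theorem~\ref{thm:Dshapley} and then push the outer expectation over $z \sim \D$ through the (finite) sum and the expectation over $S$, exploiting that the samples are i.i.d. Concretely, I would first write
\begin{align*}
\E_{z \sim \D}\left[\val(z;\pot,\D,m)\right]
&= \E_{z \sim \D}\,\E_{\substack{k \sim [m]\\S \sim \D^{k-1}}}\left[\pot(S \cup \set{z}) - \pot(S)\right]\\
&= \frac{1}{m}\sum_{k=1}^m \E_{\substack{S \sim \D^{k-1}\\z \sim \D}}\left[\pot(S \cup \set{z}) - \pot(S)\right],
\end{align*}
where the reordering is justified by linearity of expectation (and the finiteness of the sum over $k$).

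The key observation is then that, for each fixed $k$, since $S \sim \D^{k-1}$ and $z \sim \D$ are drawn independently and i.i.d., the set $S \cup \set{z}$ is distributed exactly as $\D^k$. Writing $g(k) \triangleq \E_{S \sim \D^k}\left[\pot(S)\right]$ for the expected potential of a fresh data set of size $k$ (with $g(0) = \pot(\emptyset)$), this gives $\E_{S \sim \D^{k-1}}\E_{z \sim \D}\left[\pot(S \cup \set{z})\right] = g(k)$ and $\E_{S \sim \D^{k-1}}\left[\pot(S)\right] = g(k-1)$, so the $k$-th summand equals $g(k) - g(k-1)$.

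Finally I would collapse the telescoping sum:
\begin{equation*}
\E_{z \sim \D}\left[\val(z;\pot,\D,m)\right] = \frac{1}{m}\sum_{k=1}^m \left(g(k) - g(k-1)\right) = \frac{g(m) - g(0)}{m} = \frac{\E_{B \sim \D^m}\left[\pot(B)\right] - \pot(\emptyset)}{m},
\end{equation*}
which is the claimed identity. I do not anticipate a real obstacle here; the only point that needs a moment of care is the distributional identity $S \cup \set{z} \sim \D^k$ for $S \sim \D^{k-1}$, $z \sim \D$ independent, which is exactly the same i.i.d.\ fact used in the proof of Theorem~\ref{thm:Dshapley} (step~(\ref{thm:Dshapley:pf})), and the bookkeeping of the telescoping sum and the base case $g(0) = \pot(\emptyset)$.
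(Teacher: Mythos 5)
Your proposal is correct and follows essentially the same route as the paper's proof: start from the marginal-contribution characterization of Theorem~\ref{thm:Dshapley}, swap expectations by linearity, observe that $S \cup \set{z} \sim \D^k$ when $S \sim \D^{k-1}$ and $z \sim \D$ are independent, and collapse the resulting telescoping sum. The $g(k)$ notation is a cosmetic reorganization of the same steps.
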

\begin{proof}
We expand the expected distributional Shapley value with its definition and then apply linearity of expectation.
\begin{align*}
\E_{z \sim \D}\left[\val(z;\pot,\D,m)\right]
&= \E_{z \sim \D}\left[\E_{\substack{k\sim [m]\\S \sim \D^{k-1}}}\left[\pot(S \cup \set{z}) - \pot(S)\right]\right]\\
&= \frac{1}{m} \cdot \sum_{k=1}^m \left(\E_{\substack{z \sim \D\\S \sim \D^{k-1}}}\left[\pot(S \cup \set{z})\right] - \E_{S \sim \D^{k-1}}\left[\pot(S)\right]\right)\\
&= \frac{1}{m} \cdot \sum_{k=1}^m \left(\E_{S_k \sim \D^{k}}\left[\pot(S_k)\right] - \E_{S_{k-1} \sim \D^{k-1}}\left[\pot(S_{k-1})\right]\right)\\
&= \frac{1}{m} \cdot \left(\E_{B \sim \D^m}\left[\pot(B)\right] - \pot(\emptyset)\right)
\end{align*}
\end{proof}

\section{Distributional Shapley Value for Mean Estimation}
\label{app:meanest}

\begin{proposition*}[Restatement of Proposition~2.4]
Suppose $\D$ has bounded second moments.
Then for $z \in \Z$ and $m \in \N$, $\val(z;\pot_\mu,\D,m)$ for mean estimation over $\D$ is given by
\begin{gather*}
\frac{\E_{S \sim \D^m}\left[\pot(S)\right]}{m} + 
\frac{C_m}{m}\cdot \left(\E_{s \sim \D}\left[\norm{s-\mu}^2\right] - \norm{z - \mu}^2\right)
\end{gather*}
for an explicit constant $C_m = \Theta(1)$ determined by $m$.
\end{proposition*}

\begin{proof}
Consider the unsupervised learning task of mean estimation using the empirical estimator.
Specifically, suppose we receive samples from some distribution $\D$ supported on $\R^d$ with mean $\mu = \E_{s \sim \D}[s]$ and bounded second moments.
Given a subset $S \subseteq \R^d$, we consider the empirical estimator $\hat{\mu}_S = \frac{1}{\card{S}}\cdot \sum_{s \in S} s$.
We define a potential $\pot(S)$ by the performance of the empirical estimator.
For notational convenience, let $\E_{s \sim \D}\left[\norm{s - \mu}^2\right] = R^2$ for some $R = \Theta(1)$.
\begin{align*}
\pot(S) &= \E_{s \sim \D}\left[\norm{s - \mu}^2\right] - \norm{\hat{\mu}_S - \mu}^2\\
&= R^2 - \norm{\hat{\mu}_S - \mu}^2
\end{align*}
By convention, we will assume that $\pot(\emptyset) = 0$.
As such, we can evaluate the difference in potentials as follows.
\begin{align*}
&=\left(R^2 - \norm{\mu - \hat{\mu}_{S\cup \set{z}}}^2\right) - \left(R^2 -  \norm{\mu - \hat{\mu}_S}^2\right)\\
&= \norm{\mu - \hat{\mu}_S}^2 - \norm{\mu - \hat{\mu}_{S\cup \set{z}}}^2\\
\end{align*}
Importantly, note that we can relate $\hat{\mu}_{S \cup \set{z}}$ to $\hat{\mu}_S$.
\begin{equation*}
\hat{\mu}_{S \cup \set{z}} = \hat{\mu}_S + \frac{1}{k}\cdot \left(z - \hat{\mu}_S\right)
\end{equation*}
Using these expressions, we can expand the distributional Shapley value into a form that will be convenient to work with.
\begin{align*}
\val(z;\pot,\D,m) &=
\E_{\substack{k \sim [m]\\S \sim \D^{k-1}}}\left[\pot(S \cup \set{z}) - \pot(S)\right]\\
&= \frac{1}{m}\cdot \sum_{k=1}^m \E_{S \sim \D^{k-1}}\left[\pot(S \cup \set{z}) - \pot(S)\right]\\
&= \frac{1}{m}\cdot \left(\pot(\set{z}) - \pot(\emptyset) + \sum_{k=2}^m \E_{S \sim \D^{k-1}}\left[\pot(S \cup \set{z}) - \pot(S)\right]\right)\\
&= \frac{1}{m}\cdot \left(R^2 - \norm{z-\mu}^2 + \sum_{k=2}^m \E_{S \sim \D^{k-1}}\left[ \norm{\mu - \hat{\mu}_S}^2 - \norm{\mu - \hat{\mu}_{S\cup \set{z}}}^2\right]\right)
\end{align*}
We, thus, focus our efforts on bounding the summation from $k=2$ to $m$.
As such, we can evaluate the difference in potentials within the expectation as follows.
\begin{align*}
&\phantom{=}\norm{\mu - \hat{\mu}_S}^2 - \norm{\mu - \hat{\mu}_{S\cup \set{z}}}^2\\
&=\norm{\mu - \hat{\mu}_S}^2 - \norm{\mu - \hat{\mu}_S - \frac{1}{k}\cdot \left(z-\hat{\mu}_S\right)}^2\\
&= \norm{\mu - \hat{\mu}_S}^2 - \left(\norm{\mu - \hat{\mu}_S}^2 + \frac{1}{k^2} \cdot \norm{z - \hat{\mu}_S}^2 - \frac{2}{k}\cdot \left\langle \mu - \hat{\mu}_S, z - \hat{\mu}_S \right\rangle\right)\\
&=\frac{2}{k}\cdot \left\langle \mu - \hat{\mu}_S, z - \hat{\mu}_S \right\rangle - \frac{1}{k^2} \cdot \norm{z - \hat{\mu}_S}^2
\end{align*}
Taking an expectation over $S \sim \D^{k-1}$, we can simplify each term in the summation separately;
first, some identities that will be useful and hold for all $n \in \N$:
\begin{gather}
\E_{S \sim \D^{n}}\left[\hat{\mu}_S\right] = \mu \label{expectation}\\
\E_{S \sim \D^{n}}\left[\langle \mu - \hat{\mu}_S, q \rangle \right] = 0\label{q}\\
\E_{S \sim \D^{n}}\left[\norm{\hat{\mu}_S}^2 - \norm{\mu}^2\right]
= \E_{S \sim \D^{n}}\left[\norm{\hat{\mu}_S - \mu}^2\right]
= \frac{1}{n}\cdot \E_{s\sim \D}\left[\norm{s-\mu}^2\right]\label{variance}
\end{gather}
where (\ref{expectation}) follows because $\hat{\mu}_S$ an unbiased estimator of $\mu$; (\ref{q}) holds for all $q \in \R^d$; and (\ref{variance}) is a well-known fact that can be derived using (\ref{expectation}) and (\ref{q}).

Beginning with the first inner product.
\begin{align*}
\E_{S \sim \D^{k-1}}\left[\langle \mu - \hat{\mu}_S, z - \hat{\mu}_S \rangle\right]
&= \E_{S \sim \D^{k-1}}\left[\langle \mu - \hat{\mu}_S, \mu - \hat{\mu}_S \rangle\right]\addtag\label{applying:q1}\\
&= \E_{S \sim \D^{k-1}}\left[\norm{\mu - \hat{\mu}_S}^2\right]\\
&= \frac{1}{k-1}\cdot \E_{s \sim \D}\left[\norm{s - \mu}^2\right]\addtag\label{applying:variance1}
\end{align*}
where (\ref{applying:q1}) applies (\ref{q}) with $q = \mu - z$ and (\ref{applying:variance1}) applies (\ref{variance}).

Expanding the next term.
\begin{align*}
\E_{S \sim \D^{k-1}}\left[\norm{z-\hat{\mu}_S}^2\right]
&= \E_{S \sim \D^{k-1}}\left[\norm{z}^2 + \norm{\hat{\mu}_S}^2 - 2 \langle z, \hat{\mu}_S\rangle + \norm{\mu}^2 - \norm{\mu}^2\right]\\
&= \E_{S \sim \D^{k-1}}\left[\norm{z}^2 - 2 \langle z, \hat{\mu}_S\rangle + \norm{\mu}^2\right]
+ \E_{S \sim \D^{k-1}}\left[\norm{\hat{\mu}_S}^2 - \norm{\mu}^2\right]\\
&= \norm{z - \mu}^2 + \frac{1}{k-1}\cdot \E_{s \sim \D}\left[\norm{s - \mu}^2\right]\addtag\label{linearity-zero}
\end{align*}
where (\ref{linearity-zero}) follows by
applying linearity of expectation and (\ref{expectation}) to the first term and (\ref{variance}) to the second term.

Thus, in all, the value can be expressed as follows. \begin{align*}
&\sum_{k = 2}^m\E_{S \sim \D^{k-1}}\left[\frac{2}{k}\cdot \left\langle \mu - \hat{\mu}_S, z - \hat{\mu}_S \right\rangle - \frac{1}{k^2} \cdot \norm{z - \hat{\mu}_S}^2\right]\\
&= \sum_{k=2}^m \left(\frac{2}{k\cdot(k-1)}\cdot \E_{s \sim \D}\left[\norm{s - \mu}^2\right] - \frac{1}{k^2\cdot(k-1)}\cdot\E_{s \sim \D}\left[\norm{s - \mu}^2\right] - \frac{1}{k^2}\cdot \norm{z - \mu}^2\right)\\
&= \sum_{k=2}^m \left(\frac{2R^2 - \norm{z - \mu}^2}{k\cdot(k-1)} - \frac{R^2 - \norm{z-\mu}^2}{k^2\cdot(k-1)}\right)\\
&= \frac{m-1}{m} \cdot \left(2R^2 - \norm{z - \mu}^2\right) + c(m)\cdot \left(R^2 - \norm{z - \mu}^2\right)\\
&= \frac{m-1}{m} \cdot R^2 + \left(1- 1/m + c(m)\right)\cdot \left(R^2 - \norm{z - \mu}^2\right)
\end{align*}
where $\sum_{k=2}^m \frac{1}{k\cdot(k-1)} = \frac{m-1}{m}$ and we take $c(m) = \sum_{k=2}^m \frac{1}{k^2\cdot(k-1)}$.
Thus, plugging this expression back into our original expansion.
\begin{align*}
&\val(z;\pot,\D,m)\\
&= \frac{1}{m}\cdot \left(\frac{m-1}{m}\cdot \left(2R^2 - \norm{z-\mu}^2\right) + (1+c(m))\cdot\left(R^2 - \norm{z-\mu}^2\right)\right)\\
&= \frac{m-1}{m^2}\cdot R^2 + \frac{C(m)}{m} \cdot \left(R^2 - \norm{z-\mu}^2\right)\\
&= \frac{1}{m}\cdot \left(C(m) \cdot \left(\E_{s \sim \D}\left[\norm{s-\mu}^2\right] - \norm{z - \mu}^2\right) + \left(\E_{s \sim \D}\left[\norm{s-\mu}^2\right] - \E_{S \sim \D^m}\left[\norm{\hat{\mu}_S - \mu}^2\right]\right)\right)
\end{align*}
where $C(m) = 2 - 1/m + c(m) = \Theta(1)$ is an explicit function of $m$, and we use the fact that $\E_{S \sim \D^m}\left[\norm{\hat{\mu}_S - \mu}^2\right] = \frac{1}{m} \cdot R^2$. \end{proof}

\section{Lipschitz Stability of RKHS}
\label{app:rkhs}

Suppose $\Z = \X \times \Y$;
let $\F$ to denote a Reproducing Kernel Hilbert Space (RKHS), with associated feature map $\varphi:\X \to \F$, inner product $\langle \cdot, \cdot \rangle_\F$, and norm $\norm{\cdot}_\F$,
such that for all $f \in \F$,
\begin{equation*}
f(x) = \langle f, \varphi(x) \rangle_\F
\end{equation*}
Given $\F$, we define a natural metric over $\Z$, where for labeled pairs $z_i = (x_i,y_i)$ and $z_j = (x_j,y_j)$,
\begin{equation*}
d_\F(z_i,z_j) = \begin{cases}
\norm{\varphi(x_i) - \varphi(x_j)}_\F & \text{ if}~ y_i = y_j\\
+\infty & \text{ o.w.}
\end{cases}
\end{equation*}
That is, the distance is given by the RKHS norm if $x_i$ and $x_j$ have the same label, and are arbitrarily dissimilar otherwise.

We define the potential of a subset $S \subseteq \Z$ for an RKHS learning problem as the performance achieved when training using $S$.
Specifically, suppose $\ell:[0,1]\times[0,1] \to \R^+$ is an $L$-Lipschitz, convex loss function and $\D$ is a distribution supported on $\Z$.
We define the potential function $\pot_\F:\Z^* \to [0,1]$ in terms of the population loss over $\D$ of the following regularized ERM.
\begin{gather*}
\pot_\F(S) = 1 - \E_{(x,y) \sim \D}\left[\ell(f_S(x),y)\right]\\
\text{where }~f_S = \argmin_{f \in \F} \set{\er_S(f) + \frac{\lambda}{2}\norm{f}^2_\F}
\end{gather*}
where $\er_S(f) = \frac{1}{\card{S}} \sum_{(x,y) \in S} \ell(f(x),y)$ and $\lambda > 0$.

\begin{lemma}[Learning RKHS is Lipschitz stable]
Suppose $\F$ is a RKHS with feature map $\phi:\X \to \F$.
Let $\D$ be a distribution over $\Z = \X \times \Y$ such that $\E_{(x,y)\sim \D}\left[\norm{\phi(x)}_\F\right] = R$.
Then, $\pot_\F:\Z^* \to [0,1]$ is $(2L^2R/\lambda k)$-Lipschitz stable with respect to $d_\F$.
\end{lemma}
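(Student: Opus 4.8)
The plan is to reduce Lipschitz stability of the potential $\pot_\F$ to Lipschitz stability of the learned predictor $f_S$ in the RKHS norm, and then invoke a classical algorithmic-stability bound for regularized ERM. First I would observe that it suffices to bound $\card{\pot_\F(S\cup\set{z}) - \pot_\F(S\cup\set{z'})}$ for $z=(x,y),z'=(x',y')$ with $y=y'$ (otherwise $d_\F(z,z')=+\infty$ and the bound is vacuous). Since $\pot_\F(Z) = 1 - \E_{(x,y)\sim\D}[\ell(f_Z(x),y)]$, the difference of potentials is at most $\E_{(x,y)\sim\D}\card{\ell(f_{S\cup\set{z}}(x),y) - \ell(f_{S\cup\set{z'}}(x),y)}$, and by $L$-Lipschitzness of $\ell$ in its first argument this is at most $L\cdot\E_{(x,y)\sim\D}\card{f_{S\cup\set{z}}(x) - f_{S\cup\set{z'}}(x)}$. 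Using the reproducing property $f(x) = \langle f,\varphi(x)\rangle_\F$ and Cauchy--Schwarz, $\card{f_{S\cup\set{z}}(x)-f_{S\cup\set{z'}}(x)} \le \norm{f_{S\cup\set{z}} - f_{S\cup\set{z'}}}_\F\cdot\norm{\varphi(x)}_\F$, so taking expectations and using $\E\norm{\varphi(x)}_\F = R$ yields $\card{\pot_\F(S\cup\set{z})-\pot_\F(S\cup\set{z'})} \le LR\cdot\norm{f_{S\cup\set{z}}-f_{S\cup\set{z'}}}_\F$.

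The remaining task is to bound $\norm{f_{S\cup\set{z}} - f_{S\cup\set{z'}}}_\F$ by $\frac{2L}{\lambda k}d_\F(z,z')$ (note $\card{S\cup\set{z}}=k$ when $S\in\Z^{k-1}$). This is the standard stability argument of Bousquet--Elisseeff: both $f_{S\cup\set{z}}$ and $f_{S\cup\set{z'}}$ minimize strongly convex objectives (the regularizer makes the objective $\lambda$-strongly convex in $\norm{\cdot}_\F$), and the two objectives differ only in a single summand of the empirical risk, namely $\frac{1}{k}\ell(f(x),y)$ versus $\frac{1}{k}\ell(f(x'),y')$. I would write $g = f_{S\cup\set{z}}$, $g' = f_{S\cup\set{z'}}$, and use the first-order optimality / strong convexity inequalities: from $\lambda$-strong convexity of each regularized objective around its minimizer, one gets
\begin{equation*}
\frac{\lambda}{2}\norm{g-g'}_\F^2 + \frac{\lambda}{2}\norm{g-g'}_\F^2 \le \frac{1}{k}\left(\ell(g'(x),y) - \ell(g(x),y) + \ell(g(x'),y') - \ell(g'(x'),y')\right),
\end{equation*}
and then bound the right-hand side using $L$-Lipschitzness of $\ell$: it is at most $\frac{L}{k}\left(\card{g'(x)-g(x)} + \card{g(x')-g'(x')}\right)$. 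Since $y=y'$, each of these is $\le \norm{g-g'}_\F\cdot\norm{\varphi(x)-\varphi(x')}_\F = \norm{g-g'}_\F\cdot d_\F(z,z')$ — here is where the distance $d_\F(z,z') = \norm{\varphi(x)-\varphi(x')}_\F$ enters, since $g(x)-g(x') = \langle g,\varphi(x)-\varphi(x')\rangle_\F$ and similarly for $g'$, so $g'(x)-g(x)$ paired with $g(x')-g'(x')$ combines into $\langle g'-g,\varphi(x)-\varphi(x')\rangle_\F$ up to sign, bounded by $\norm{g-g'}_\F d_\F(z,z')$. This gives $\lambda\norm{g-g'}_\F^2 \le \frac{2L}{k}\norm{g-g'}_\F\cdot d_\F(z,z')$, hence $\norm{g-g'}_\F \le \frac{2L}{\lambda k}d_\F(z,z')$.

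Combining the two parts gives $\card{\pot_\F(S\cup\set{z})-\pot_\F(S\cup\set{z'})} \le LR\cdot\frac{2L}{\lambda k}d_\F(z,z') = \frac{2L^2R}{\lambda k}d_\F(z,z')$, which is exactly the claimed $(2L^2R/\lambda k)$-Lipschitz stability (and the factor $\beta(k)$ is non-increasing in $k$ as required). I expect the main obstacle to be stating the strong-convexity/optimality inequality cleanly — one must be careful that $f_{S\cup\set{z}}$ minimizes the objective with the point $z$ included and $f_{S\cup\set{z'}}$ the one with $z'$ included, correctly track which single term differs between the two empirical risks (the shared part of $S$ contributes identically), and handle the combinatorial edge case $S=\emptyset$ (i.e. $k=1$), where $f_{\set{z}}$ and $f_{\set{z'}}$ still satisfy the one-sample version of the bound. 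The convexity of $\ell$ is used to ensure the minimizers $f_S$ are well-defined and unique; I would note $\pot_\F$ takes values in $[0,1]$ is implicitly assumed via the loss range, so I would not belabor it.
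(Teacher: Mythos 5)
Your overall plan — reduce $\card{\pot_\F(S\cup\set{z}) - \pot_\F(S\cup\set{z'})}$ to $LR\cdot\norm{f_{S\cup\set{z}}-f_{S\cup\set{z'}}}_\F$ via Lipschitzness of $\ell$ and the reproducing property, then control the latter by a regularized-ERM stability argument — is the same as the paper's. Your strong-convexity inequality
\begin{equation*}
\lambda\norm{g-g'}_\F^2 \;\le\; \tfrac{1}{k}\bigl(\ell(g'(x),y)-\ell(g(x),y)+\ell(g(x'),y)-\ell(g'(x'),y)\bigr)
\end{equation*}
is algebraically the same bound the paper derives via the $f_\alpha$-interpolation trick with $\alpha = 1/2$, so up to this point the two proofs coincide.

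There is, however, a genuine gap in how you bound the right-hand side. You first pass to $\tfrac{L}{k}\bigl(\card{g'(x)-g(x)}+\card{g(x')-g'(x')}\bigr)$ and then assert that each term is at most $\norm{g-g'}_\F\cdot d_\F(z,z')$. That is false: by the reproducing property and Cauchy--Schwarz, $\card{g'(x)-g(x)} = \card{\langle g'-g,\varphi(x)\rangle_\F} \le \norm{g-g'}_\F\,\norm{\varphi(x)}_\F$, with no appearance of $\varphi(x)-\varphi(x')$ whatsoever. Splitting this way routes you back to the classical Bousquet--Elisseeff \emph{uniform} stability bound in terms of $\sup_x\norm{\varphi(x)}_\F$, not Lipschitz stability. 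The salvage you offer — that $g'(x)-g(x)$ ``paired with'' $g(x')-g'(x')$ telescopes to $\langle g'-g,\varphi(x)-\varphi(x')\rangle_\F$ — is an algebraic identity that holds only for the \emph{signed} sum; once you have taken $\card{\cdot}$ on each term, the cancellation that makes the identity hold is destroyed, so you cannot recombine. The paper avoids this by grouping the four loss terms as $\bigl[\ell(f'(x),y)-\ell(f'(x'),y)\bigr]-\bigl[\ell(f(x),y)-\ell(f(x'),y)\bigr]$, i.e.\ pairing evaluations of the \emph{same} hypothesis at the two nearby inputs $x,x'$ rather than the two hypotheses at the same input; each Lipschitz application then produces a term of the form $\langle f,\varphi(x)-\varphi(x')\rangle_\F$, so $d_\F(z,z')$ enters directly, and only afterwards does Cauchy--Schwarz give $\norm{f-f'}_\F\, d_\F(z,z')$. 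You should adopt that regrouping and propagate the signed inequalities (not absolute values) through the Lipschitz step before applying Cauchy--Schwarz.
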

In other words, as with the standard notion of replacement stability, the Lipschitz stability depends on the Lipschitz constant of the loss, the expected norm over $\D$, and (inversely on) the degree of regularization.
\begin{proof}
We follow the proof of replacement stability of RKHS from \cite{agarwalnotes} closely.
For notational convenience, we drop reference to $\F$ in $\norm{\cdot}$ and $\langle \cdot,\cdot\rangle$.

Suppose $S \in \Z^{k-1}$ and suppose $z,z' \in \Z$ are two points such that $z = (x,y)$ and $z' = (x',y)$; i.e.\ they share the same label.
By the definition of $d_\F$, this is the only case we need to consider.
Let $f,f' \in \F$ denote the empirical risk minimizers over $S \cup \set{z}$ and $S \cup \set{z'}$, respectively.
\begin{gather*}
f = \argmin_{g \in \F} \set{\er_{S\cup \set{z}}(g) + \frac{\lambda}{2}\norm{g}^2}\\
f' = \argmin_{g \in \F} \set{\er_{S\cup \set{z'}}(g) + \frac{\lambda}{2}\norm{g}^2}\\
\end{gather*}
For $\alpha \in [0,1]$, let
\begin{gather*}
f_\alpha = \alpha \cdot f + (1-\alpha)\cdot f'\\
f_\alpha' = (1-\alpha)\cdot f + \alpha \cdot f'.
\end{gather*}
By the assumption that $\ell$ is convex, we can derive the following inequalities for any $(x,y) \in \Z$ and any $S \subseteq \Z$.
\begin{gather}
\ell(f_\alpha(x), y) \le \alpha \cdot \ell(f(x), y) + (1-\alpha) \cdot \ell(f'(x), y) \notag \\
\implies \er_S(f_\alpha) \le \alpha \cdot \er_S(f) + (1-\alpha) \cdot \er_S(f')\label{er:convex}
\end{gather}

Note that $f_\alpha$ and $f'_\alpha$ are also feasible hypotheses in the Hilbert space.
Thus, by the fact that $f,f'$ are ERMs,
\begin{align*}
\er_{S \cup \set{z}}(f) + \frac{\lambda}{2}\norm{f}^2 &\le \er_{S \cup \set{z}}(f_\alpha) + \frac{\lambda}{2}\norm{f_\alpha}^2\\
\er_{S \cup \set{z'}}(f') + \frac{\lambda}{2}\norm{f'}^2 &\le \er_{S \cup \set{z'}}(f_\alpha') + \frac{\lambda}{2}\norm{f_\alpha'}^2
\end{align*}
Rearranging, and applying convexity, we derive the following inequality.

\begin{gather}
\label{ineq:rkhs:main}
\frac{\lambda}{2}\cdot\left(\norm{f}^2 + \norm{f'}^2 - \norm{f_\alpha}^2 - \norm{f_\alpha'}^2 \right)
\le \er_{S \cup \set{z}}(f_\alpha) - \er_{S \cup \set{z}}(f) + \er_{S \cup \set{z'}}(f_\alpha') - \er_{S \cup \set{z'}}(f') 
\end{gather}
We can simplify the inner term of the LHS of (\ref{ineq:rkhs:main}) as follows.
\begin{align*}
\norm{f}^2 + \norm{f'}^2 - \norm{f_\alpha}^2 - \norm{f_\alpha'}^2 &=
\norm{f}^2 + \norm{f'}^2 - \norm{f' - \alpha (f' - f)}^2 - \norm{f + \alpha (f' - f)}^2\\
&= 2\alpha \langle f' - f, f' - f \rangle - 2\alpha^2 \norm{f' - f}^2\\
&= 2\alpha(1-\alpha) \norm{f-f'}^2
\end{align*}

Then, we can bound the RHS of (\ref{ineq:rkhs:main}) as follows.
\begin{align}
\er_{S \cup \set{z}}(f_\alpha) &- \er_{S \cup \set{z}}(f) + \er_{S \cup \set{z'}}(f_\alpha') - \er_{S \cup \set{z'}}(f') \notag\\
&\le (1-\alpha) \cdot \left(\er_{S \cup \set{z}}(f') - \er_{S \cup \set{z}}(f) + \er_{S \cup \set{z'}}(f) - \er_{S \cup \set{z'}}(f')\right) \label{appconvex}\\
&= \frac{1-\alpha}{k}\cdot \left(\ell(f'(x),y)-\ell(f(x),y) + \ell(f(x'),y)-\ell(f'(x'),y) + (k-1)\cdot 0\right) \label{zero}\\
&= \frac{(1-\alpha)}{k}\cdot \left(\ell(f'(x),y) - \ell(f'(x'),y) - \ell(f(x), y) + \ell(f(x'),y)\right)\notag\\
&\le \frac{(1-\alpha) L}{k}\cdot \langle f' - f, \varphi(x) - \varphi(x') \rangle \label{lipschitz}\\
&\le \frac{(1-\alpha) L}{k}\cdot \norm{f-f'} \cdot \norm{\varphi(x) - \varphi(x')} \label{cauchy}
\end{align}
where (\ref{appconvex}) follows by expanding according to (\ref{er:convex}), and then simplifying;
(\ref{zero}) follows by the fact that $\er_{S \cup\set{z}}(f) = \frac{1}{k}\cdot \ell(f(x),y) + \frac{k-1}{k} \er_S(f)$, but the errors on $S$ cancel to $0$, $\er_S(f') - \er_S(f) + \er_S(f) - \er_S(f')$;
(\ref{lipschitz}) follows by the Lipschitzness of $\ell$;
and (\ref{cauchy}) follows by Cauchy-Schwarz.

The analysis above applied for any $\alpha \in [0,1]$; taking $\alpha = 1/2$ implies
\begin{gather*}
\frac{\lambda}{4}\cdot\norm{f-f'}^2 \le \frac{L}{2k} \cdot \norm{f-f'}\cdot \norm{\varphi(x)-\varphi(x')}\\
\implies \norm{f - f'} \le \frac{2L}{\lambda k}\cdot d_\F(z,z').
\end{gather*}

Because $\ell$ is $L$-Lipschitz, we obtain Lipschitz-stability of $\pot_\F$.
\begin{align*}
\pot_\F(S \cup \set{z}) - \pot_\F(S \cup \set{z'}) &= \E_{(x_0,y_0) \sim \D}\left[\ell(f'(x_0),y_0) - \ell(f(x_0,y_0)\right]\\
&\le \E\left[\norm{\varphi(x_0)}\right] \cdot \frac{2L^2}{\lambda k}\cdot d_\F(z,z')\\
&= \frac{2L^2R}{\lambda k}\cdot d_\F(z,z')
\end{align*}
\end{proof}

\section{Estimating  Distributional Shapley Values -- Analysis}
\label{app:alg}

We require the following standard concentration inequality.
\begin{theorem*}[Hoeffding's Inequality]
Suppose $X_1,\hdots,X_T$ are independent random variables, where $X_t$ is bounded in the range $[-b_t,b_t]$.  Let $\overline{X} = \frac{1}{T} \sum_{t=1}^T X_t$.  Then,
\begin{equation*}
\Pr\left[\card{\overline{X} - \E[\overline{X}]} > \eps\right] \le 2 \cdot \exp\left(\frac{-T^2\eps^2}{2\cdot \sum_{t=1}^T b_t^2}\right)
\end{equation*}
\end{theorem*}

\subsection{Iteration complexity of Algorithm~1.}

\begin{theorem*}[Restatement of Theorem~3.1]
Fixing a potential $U$ and distribution $\D$, and $Z \subseteq \Z$, suppose  $T \ge \Omega\left(\frac{\log(\card{Z}/\delta)}{\eps^2}\right)$.
Algorithm~1 produces unbiased estimates and with probability at least $1-\delta$,
$\card{\val(z;\pot,\D,m) - \val_T(z)} \le \eps$.
for all $z \in Z$.
\end{theorem*}
\begin{proof}
The theorem follows from the analysis of Theorem~\ref{thm:general:biased}, by taking the stability to be trivial, $\beta(k) = 1$, and using uniform sampling $w_k = 1/m$ for all $k \in [m]$.
\end{proof}

\subsection{Running time analysis under stability.}

\begin{theorem}[Generalizes Theorem~\ref{thm:biased}]
\label{thm:general:biased}
Suppose $\pot$ is $\beta(k)$-deletion stable and for all sets $S \subseteq \Z$ of cardinality $\card{S} = k$, $\pot(S)$ can be evaluated in time $R(k)$;
consider a set of positive weights $\set{w_k : k \in [m]}$ and $p \in [0,1]$.
Algorithm~2 produces unbiased estimates of $\val(z;\pot,\D,m)$ that with probability at least $1-\delta$ are $\eps$-accurate for all $z \in Z_p$ and runs in expected time
\begin{align*}
RT_w(m) &\le O\left(p \cdot \card{Z} \cdot \frac{\log(\card{Z}/\delta)}{\eps^2m^2} \cdot \left(\sum_{k=1}^m \frac{\beta(k)^2}{w_k}\right) \cdot \left(\sum_{k=1}^m w_k \cdot R(k)\right)\right)
\end{align*}
\end{theorem}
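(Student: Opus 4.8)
The plan is to establish the claim in three parts — unbiasedness of the running estimates, a uniform concentration bound that fixes the iteration count $T$, and an accounting of the per-iteration cost — and then read off the running time as their product. Throughout, fix $z \in Z_p$ and write $\Delta_z\pot(S) = \pot(S\cup\set{z}) - \pot(S)$ (as in the pseudocode) and $X_t^z = \Delta_z\pot(S_t)/(w_{k_t} m)$ for the importance-weighted increment added to $z$'s running average in iteration $t$, where $k_t \sim [m]_w$ and $S_t$ is the drawn sample of $k_t - 1$ points (for the main argument we treat $S_t \sim \D^{k_t - 1}$; see the closing remark on the finite database). Since $\Pr[k_t = k] = w_k$, the weight $1/(w_k m)$ cancels the sampling bias exactly:
\[
\E[X_t^z] = \sum_{k=1}^m w_k\cdot\frac{1}{w_k m}\,\E_{S\sim\D^{k-1}}[\Delta_z\pot(S)] = \E_{k\sim[m]}\E_{S\sim\D^{k-1}}[\pot(S\cup\set{z}) - \pot(S)] = \val(z;\pot,\D,m),
\]
the last equality by Theorem~\ref{thm:Dshapley}. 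As $\val_T(z)$ is the average of the $X_t^z$ over iterations, it is an unbiased estimator of $\val(z;\pot,\D,m)$.

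For concentration, observe that for fixed $z$ the variables $X_1^z,\dots,X_T^z$ are independent across iterations (each draws a fresh $k_t$ and $S_t$), even though they are correlated across $z$. By $\beta(k)$-deletion stability, $\card{\Delta_z\pot(S_t)} \le \beta(k_t)$ whenever $\card{S_t} = k_t - 1$, so $\card{X_t^z}\le\beta(k_t)/(w_{k_t}m)$ and $\E[(X_t^z)^2]\le \frac{1}{m^2}\sum_{k=1}^m \beta(k)^2/w_k$. Grouping the $T$ iterations by the value $k$ of the sampled cardinality — which occurs about $w_k T$ times — and applying Hoeffding's inequality with these cardinality-dependent ranges (equivalently, a Bernstein-type bound using the second-moment estimate), the sum of squared ranges is $O\big(\tfrac{T}{m^2}\sum_k \beta(k)^2/w_k\big)$, so
\[
T = O\!\left(\frac{\log(\card{Z}/\delta)}{\eps^2 m^2}\sum_{k=1}^m \frac{\beta(k)^2}{w_k}\right)
\]
iterations, together with a union bound over the at most $\card{Z}$ points in $Z_p$, guarantee $\card{\val_T(z) - \val(z;\pot,\D,m)}\le\eps$ for all $z\in Z_p$ with probability at least $1-\delta$.

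For the running time, each iteration evaluates $\pot(S_t)$ once (reused across all $z$) and $\pot(S_t\cup\set{z})$ for each $z\in Z_p$, i.e.\ at most $\card{Z_p}+1$ evaluations of $\pot$ on sets of cardinality at most $k_t$, at cost $O(\card{Z_p}\cdot R(k_t))$. Taking expectations over $k_t\sim[m]_w$ gives $O(\card{Z_p}\sum_k w_k R(k))$ per iteration; multiplying by the $T$ above and then taking expectation over the subsampling (using $\E[\card{Z_p}]=p\card{Z}$, and assuming the final regression step is not the bottleneck) gives
\[
RT_w(m) = O\!\left(p\card{Z}\cdot\frac{\log(\card{Z}/\delta)}{\eps^2 m^2}\Big(\sum_{k=1}^m\frac{\beta(k)^2}{w_k}\Big)\Big(\sum_{k=1}^m w_k R(k)\Big)\right).
\]
Theorems~\ref{thm:iterations:uniform} and~\ref{thm:biased} then follow by specializing ($\beta\equiv 1$, $w_k\equiv 1/m$; and $\beta(k)=O(1/k)$, $w_k\propto 1/k$, $R(k)=\Omega(k)$, respectively).

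I expect the concentration step to be the main obstacle: since the sampled cardinality $k_t$ — and hence the range of $X_t^z$ — is itself random, applying Hoeffding with the crude uniform bound $\max_k \beta(k)/(w_k m)$ would yield an iteration count governed by $m\cdot\max_k \beta(k)^2/w_k^2$ rather than the sharper $\sum_k\beta(k)^2/w_k$; obtaining the stated bound requires a variance-aware argument (partition iterations by cardinality and use heterogeneous ranges, or a Bernstein inequality), together with a check that the random counts of each cardinality concentrate well enough to cost only a constant factor. A minor additional point is that Algorithm~\ref{alg:fast} samples from a finite database $B\sim\D^M$ rather than from $\D$: drawing the $S_t$ as pairwise-disjoint subsets of $B$ keeps them i.i.d.\ $\D^{k_t-1}$ and mutually independent, preserving the analysis whenever $M\ge T(m-1)$ — which matches the bound $M = \tilde O(m\log(\card{Z}/\delta))$ quoted in the main text.
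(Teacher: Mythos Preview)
Your proposal is correct and follows essentially the same route as the paper: unbiasedness via importance weighting and Theorem~\ref{thm:Dshapley}, concentration by partitioning the $T$ iterations according to the sampled cardinality $k$ (so that about $w_kT$ terms have range $\beta(k)/(w_km)$) and applying Hoeffding with heterogeneous ranges, then a union bound over $Z_p$ and the per-iteration cost accounting. The paper handles the random-range issue exactly as you anticipate in your ``main obstacle'' paragraph---it conditions on $T_k \ge w_kT$ for each $k$ via a multiplicative Chernoff bound and then applies Hoeffding---so your concern is well-placed but already resolved by the approach you outline.
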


\begin{proof}
First, we bound the iteration complexity and time complexity to evaluate models at each iteration within Algorithm~2.
The running time bound then follows by the fact that we need to evaluate a model per $z \in Z_p$, per iteration where the expected cardinality of $\card{Z_p} = p \cdot \card{Z}$.

Abusing notation, denote by
\begin{align*}
\Delta_z\pot(k) = \E_{S \sim \D^{k-1}}\left[\pot(S \cup \set{z}) - \pot(S)\right].
\end{align*}
Suppose we sample $k$ according to a possibly non-uniform discrete distribution where $\Pr[k \in m] = w_k$; we denote a random drawn from this distribution by $k \sim [m]_w$.
Then, by sampling $k \sim [m]_w$, computing $\Delta_z\pot(S)$ for $S \sim \D^k$, and reweighting, we obtain an unbiased estimate of $\val(z;\pot,\D,m)$.
\begin{align*}
\val(z;\pot,\D,m) &= \E_{k \sim [m]}\left[\Delta_z\pot(k)\right]\\
&= \frac{1}{m}\sum_{k=1}^m \Delta_z\pot(k)\\
&= \sum_{k=1}^m w_k \frac{\Delta_z\pot(k)}{w_k m}\\
&= \E_{k \sim [m]_w}\left[\frac{\Delta_z\pot(k)}{w_k m}\right]
\end{align*}

For simplicity, we analyze a sampling scheme where we sample $T_k$ sets with cardinality $k$ for $T_k \ge w_k \cdot T$.
(By the multiplicative Chernoff bound, this event will occur with high probability.)
That is, for all $z \in Z$ and for each $k \in [m]$, we sample $T_k$ subsets $S \sim \D^k$ and compute $\Delta_z\pot(S)$.
For each $z \in Z$, each such sample is an independent unbiased estimate of $\Delta_z\pot(k)$, so reweighting according to $w_k$ and averaging over $k \in [m]$ gives an unbiased estimate of $\val(z;\pot,\D,m)$.
\begin{align*}
\val_T(z) &=
\frac{1}{T}\sum_{k=1}^{m}\sum_{t=1}^{T_k}\frac{\Delta_z\pot(S_t)}{w_k m}
\end{align*}
Note that by $\beta(k)$-deletion stability, for each $k$, the terms in the summation associated with $\Delta_z\pot(k)$ are bounded in magnitude by $\frac{\beta(k)}{w_k m}$. 
Thus, we can apply Hoeffding's inequality to derive the following bound to obtain $\eps$-error with probability at least $1-\delta_0$.
\begin{align*}
\delta_0 &\ge 2\cdot \exp\left(\frac{-\eps^2T^2}{2\cdot \sum_{k=1}^m\sum_{t=1}^{T_k} \left(\frac{\beta(k)}{w_k m}\right)^2}\right)\\
&\ge 2\cdot \exp\left(\frac{-\eps^2T^2}{\frac{2}{m^2}\cdot \sum_{k=1}^m w_k \cdot T \cdot \left(\frac{\beta(k)}{w_k}\right)^2}\right)\\
&= 2\cdot \exp\left(\frac{-\eps^2m^2T}{2\cdot \sum_{k=1}^m \frac{\beta(k)^2}{w_k}}\right)
\end{align*}
Thus, taking the failure probability $\delta_0 = \delta/\card{Z}$ small enough to union bound over all $z \in Z$, we derive the following bound on $T$.
\begin{equation*}
T \ge \Omega\left(\frac{\log(\card{Z}/\delta)}{\eps^2m^2} \cdot \sum_{k=1}^m \frac{\beta(k)^2}{w_k}\right)
\end{equation*}

Using this bound on $T$, we can compute the necessary running time for Algorithm~2 in terms of $R(k)$ per $z \in Z_p$.
\begin{align*}
T_w(m) &= T \cdot \sum_{k=1}^m w_k \cdot R(k)\\
&= \frac{\log(\card{Z}/\delta)}{\eps^2m^2} \cdot \left(\sum_{k=1}^m \frac{\beta(k)^2}{w_k}\right) \cdot \left(\sum_{k=1}^m w_k \cdot R(k)\right)
\end{align*}
Thus, we can compare various sampling schemes for different stability factors.
Note that in the case of the uniform sampling scheme, where $w_k = 1/m$ for all $k \in [m]$, the sampling probabilities cancel.
\begin{align*}
T_u(m) &= \frac{\log(\card{Z}/\delta)}{\eps^2m^2} \cdot \left(\sum_{k=1}^m \frac{\beta(k)^2}{1/m}\right) \cdot \left(\sum_{k=1}^m 1/m \cdot R(k)\right)\\
&= \frac{\log(\card{Z}/\delta)}{\eps^2m^2} \cdot \left(\sum_{k=1}^m \beta(k)^2\right) \cdot \left(\sum_{k=1}^m R(k)\right)
\end{align*}
Thus, the overall running time is given as
$RT_w(m) = p \cdot \card{Z} \cdot \frac{\log(\card{Z}/\delta)}{\eps^2m^2} \cdot \left(\sum_{k=1}^m \frac{\beta(k)^2}{w_k}\right) \cdot \left(\sum_{k=1}^m w_k \cdot R(k)\right)$.
\end{proof}

Concretely, to see the special case of the theorem stated as Theorem~3.2, suppose that $\beta(k) = k^{-b}$ for $b \ge 1/2$ and $R(k) = k^c$ for $c \ge 1$.
With these settings of the parameters, uniform sampling takes time
\begin{align*}
T_u(m) &= \frac{\log(\card{Z}/\delta)}{\eps^2m^2} \cdot \left(\sum_{k=1}^m k^{-2b}\right) \cdot \left(\sum_{k=1}^m k^{c}\right)\\
&= \frac{\log(\card{Z}/\delta)}{\eps^2m^2} \cdot O(\log(m)) \cdot O(m^{c+1})\\
&\le \frac{\log(\card{Z}/\delta)}{\eps^2} \cdot \tilde{O}\left(m^{c-1}\right).
\end{align*}
Suppose, instead, we take $w_k \propto k^{1-2b}$; that is, we choose $w_k$ such that the first summation will still be bounded by $H_m = \Theta(\log(m))$.
Under such a sampling scheme, the running time is bounded as
\begin{align*}
T_w(m) &= \frac{\log(\card{Z}/\delta)}{\eps^2m^2} \cdot \left(\sum_{k=1}^m k^{-1}\right) \cdot \left(\sum_{k=1}^m k^{c+1-2b}\right)\\
&= \frac{\log(\card{Z}/\delta)}{\eps^2m^2} \cdot O(\log(m)) \cdot O(m^{c+2-2b})\\
&\le \frac{\log(\card{Z}/\delta)}{\eps^2} \cdot \tilde{O}\left(m^{c-2b}\right).
\end{align*}
In other words, the biased sampling scheme allows us to save roughly a factor-$m^{2b-1}$ in computation time.
Thus, if $\pot$ is $O(1/k)$-deletion stable, then the biased sampling scheme saves roughly a factor $m$ in computation time.

\subsection{Finite Sample Approximation to $\D$}

While the analysis of Theorem~\ref{thm:general:biased} focuses on the running time of Algorithm 2, we can equally interpret it as a sample complexity bound.
In particular, taking $R(k) = k$ corresponds to the sample complexity of taking a fresh sample $S \sim \D^k$ per iteration.
Thus, using Algorithm~2, the naive sample complexity given by resampling for each iteration gives the bound of
\begin{align*}
M &\approx \frac{\log(\card{Z}/\delta)}{\eps^2m^2}\cdot \left(\sum_{k=1}^m \frac{\beta(k)^2}{w_k}\right) \cdot \left(\sum_{k=1}^m w_k \cdot k\right)
\end{align*}
Taking $\beta(k) = w_k = 1/k$ yields
\begin{align*}
M &\approx \frac{\log(\card{Z}/\delta)}{\eps^2m}\cdot \left(\sum_{k=1}^m \frac{1}{k}\right)\\
&\approx \frac{\log(m) \cdot \log(\card{Z}/\delta)}{\eps^2m}
\end{align*}

\clearpage
\section{Additional Performance Experiments}
\label{app:exp}

We report the results of additional empirical evaluations of Algoirthm~\ref{alg:fast}, as described in Section~\ref{sec:alg:empirical}.
We depict the results of point removal experiments by plotting model performance vs.\ fraction of training data removed, where points are removed in order of their Shapley estimates or randomly.
We apply the interpolation and weighted sampling speed-ups from Algorithm~\ref{alg:fast}, independently.
Both strategies obtain an order-of-magnitude speed-up with no qualitative performance change.
\begin{figure*}[ht!]
\includegraphics[width=0.79\linewidth]{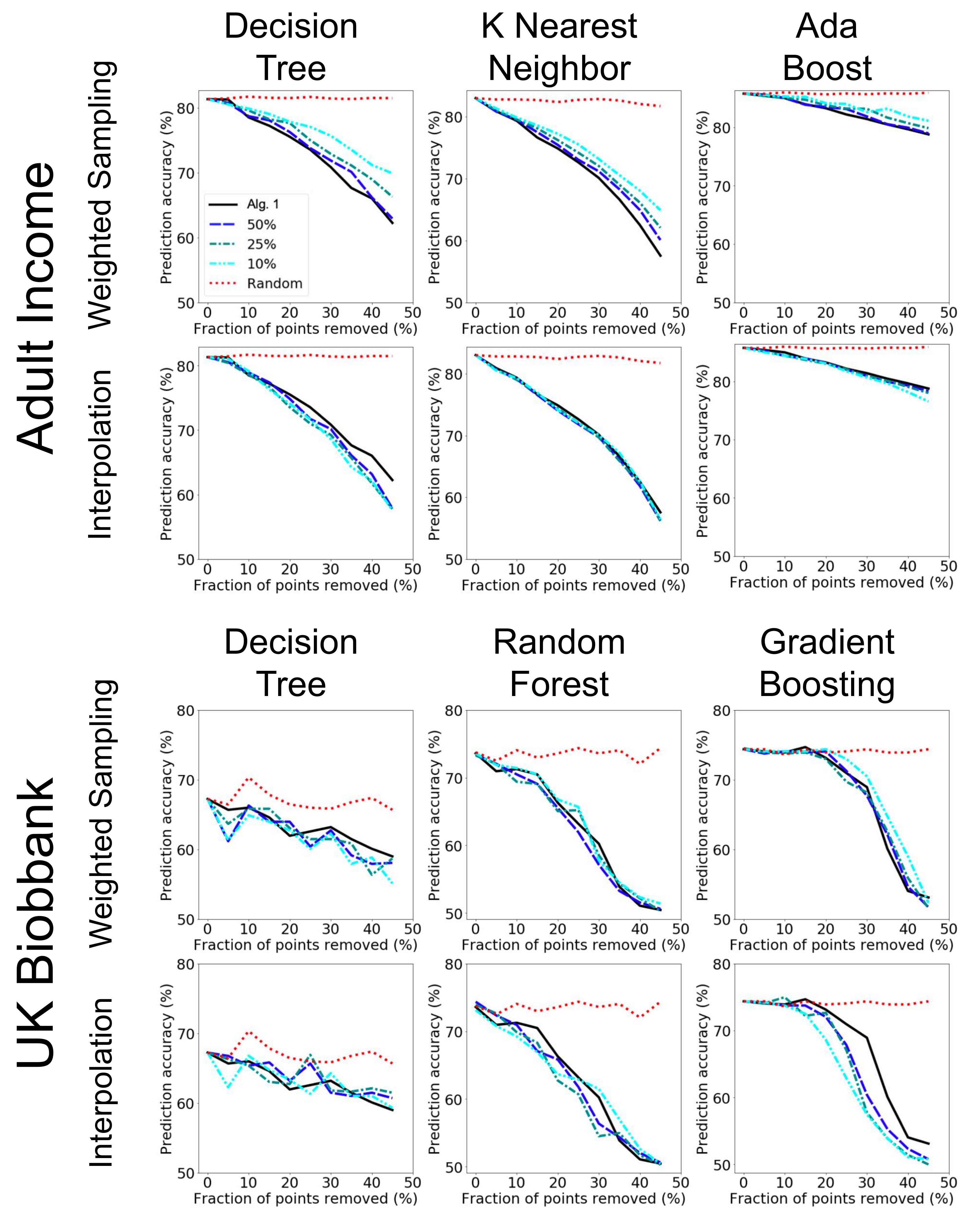} 
\centering
\caption{
Point removal curves for $\D$-Shapley estimates under various speed-ups, using weighted sampling and interpolation
across two ML tasks and three learning algorithms.
\label{fig:speedup_app}}
\end{figure*}

\clearpage
\subsection{Speeding-up Distributional Shapley for Cifar10}

In this experiment, we apply both speed-up methods to compute the distributional Shapley values for CIFAR10 dataset.
We apply weighted sampling corresponding to a speed-up factor of $10$ and subsampling with interpolation corresponding to a speed-up factor of $50$, to compute values for $1000$ data points.
We valuate points based on their effect on an image classification task.
We use an Inception-v3~\cite{szegedy2016inception} model, pretrained on the ILSVRC2012 (Imagenet)~\cite{russakovsky2015imagenet} dataset; then,
we base our potential function $\pot(S)$ on the performance of the model resulting after retraining the final layer of the network (holding all other layers fixed) using the retraining set $S$.
Figure~\ref{fig:cifar} shows the point-removal results for the complete dataset (e.g. removing $50\%$ of the points with the highest $\D$-Shapley value causes the prediction accuracy to drop from $77\%$ to $68\%$.)

\begin{figure}[ht]
\centering
\includegraphics[width=0.35\linewidth]{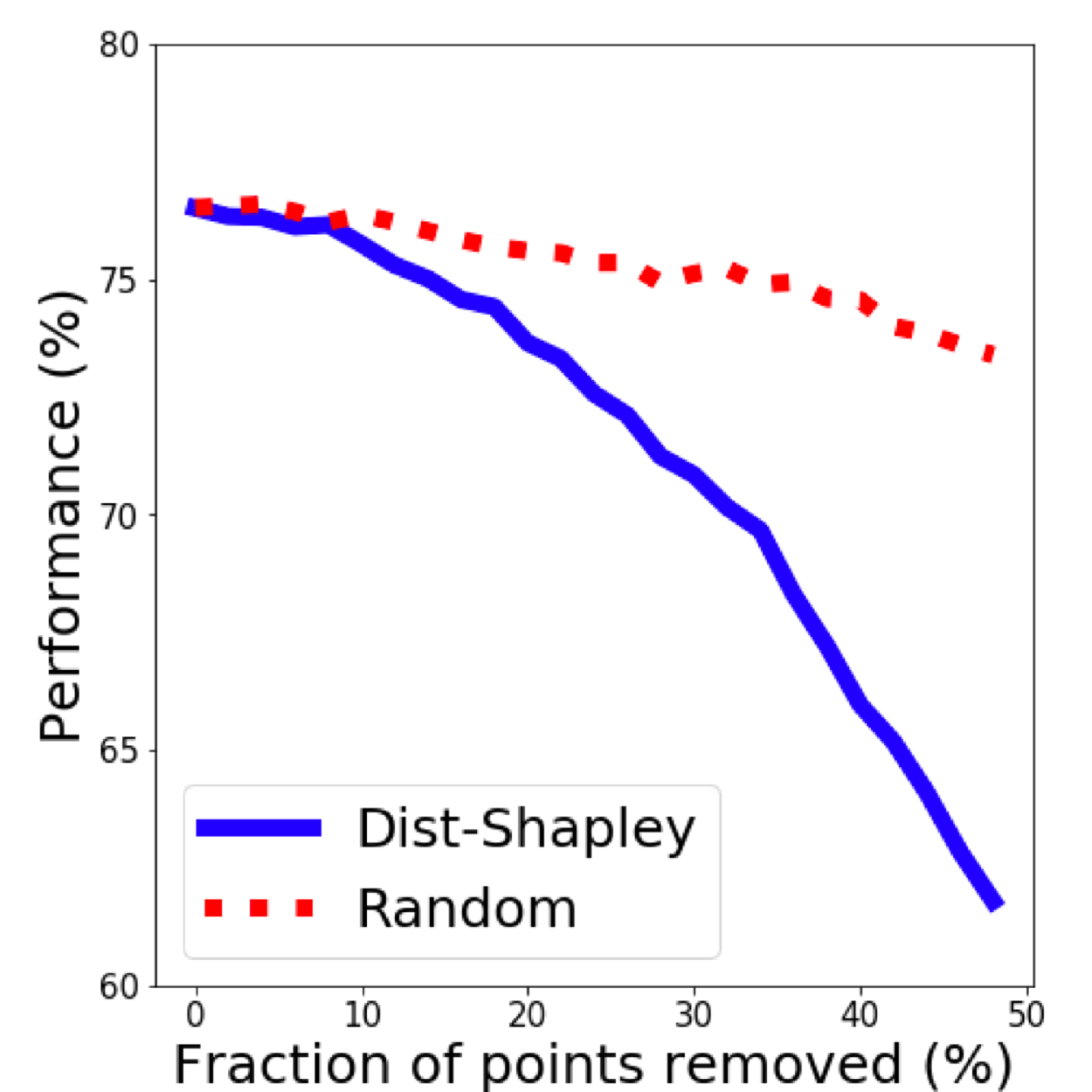} 
\caption{Point removal experiment for CIFAR10, using distributional Shapley estimates computed by \textsc{Fast-$\D$-Shpaley}.
\label{fig:cifar}}
\end{figure} 
\clearpage
\section{Additional Case Study Experiments}
\label{app:case}

We report the complete results for the caze study from Section~\ref{sec:case}.
We use four large-scale datasets from the UCI repository~\cite{Dua:2019}:
\begin{enumerate}[(1)]
    \item Covertype dataset with $581,012$ samples where each sample contains $54$ visual features of forest images and the task is to detect the type of forest cover (from a set of $7$ different covers). We use a Random Forest model.
    \item Diabetes130 dataset~\cite{strack2014impact} that with $100,000$ samples. Each sample contains $54$ patient and hospital features. The task is to predict whether the patient will be readmitted to the hospital. We use an AdaBoost model.
    \item Wearable Computing: Classification of Body Postures and Movements (PUC-Rio) Data Set which has $165,632$ points where each point has $18$ attributes and has one of the $5$ postures. We use a multinomial logistic regression model.
    \item Dataset for Sensorless Drive Diagnosis Data Set that contains $58,509$ data points. Each data point has $48$ features. The dataset has $11$ classes. We use a Gradient Boosting model.
\end{enumerate}
\begin{figure*}[ht]
\includegraphics[width=0.75\linewidth]{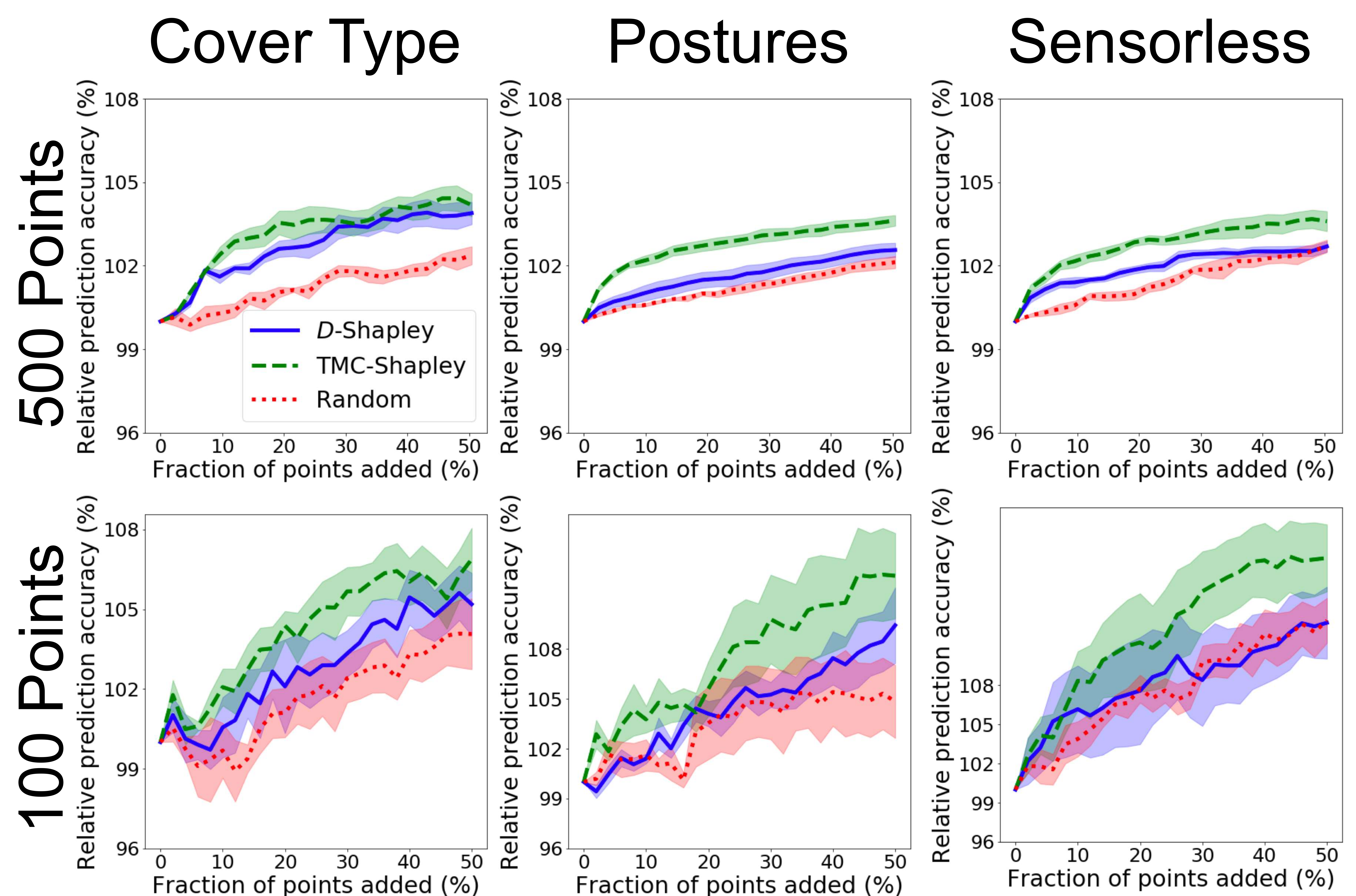} 
\centering
\caption{Points from an acquired set are added to the buyer's initial dataset in three different orders: according to $\val$ ($\D$-Shapley), according to $\sh$ (TMC), and randomly.
The plot shows the change in the accuracy of the model, relative to its performance using the buyer's initial dataset, as the points are added; shading indicates standard error of the mean. 
\label{fig:selling_app}}
\end{figure*}

\end{document}